\let\OldStatex\Statex
\renewcommand{\Statex}[1][0]{%
  \setlength\@tempdima{\algorithmicindent}%
  \OldStatex\hskip\dimexpr#1\@tempdima\relax}
\algnewcommand\algorithmicinput{\textbf{Input:}}
\algnewcommand\Input{\item[\algorithmicinput]}
\algnewcommand\algorithmicoutput{\textbf{Output:}}
\algnewcommand\Output{\item[\algorithmicoutput]}
\newtheorem{thm}{Theorem}
\newtheorem{lem}[thm]{Lemma}
\newtheorem{problem}{Problem}
\let\NAT@parse\undefined
\title{\LARGE \bf Spectral Measurement Sparsification for Pose-Graph SLAM}
\author{Kevin J. Doherty,$^{1}$ David M. Rosen,$^{2}$ and John J.
  Leonard$^1$\thanks{$^1$K. Doherty and J. Leonard are with the Massachusetts
    Institute of Technology (MIT), Cambridge, MA 02139. $^{2}$D. Rosen is with
    Northeastern University, Boston, MA 02115.}}
\newcommand{\ubar}[1]{\underaccent{\bar}{#1}}
\newcommand{\R}{\mathbb{R}}
\def \PSD{\mathbb{S}}
\def \transpose{^\mathsf{T}}
\DeclareMathOperator{\Graph}{graph}
\DeclareMathOperator{\SO}{SO}
\DeclareMathOperator{\SE}{SE}
\DeclareMathOperator*{\argmax}{argmax}
\DeclareMathOperator{\Gaussian}{\mathcal{N}}  %Normal distribution
\DeclareMathOperator{\Langevin}{Langevin}  %Langevin distribution
\def \Graph {\mathcal{G}}  %Graph operator
\def \Nodes {\mathcal{V}}  %Set of nodes
\def \Edges {\mathcal{E}}  %Set of undirected edges
\newcommand{\directed}[1]{\vec{#1}}  %Decorator to denote directed graphs
\def \dEdges{\directed{\Edges}}  %Set of directed edges
\def \edge{\lbrace i,j \rbrace}  %Undirected edge between i and j\
\def \dedge{(i,j)}  %Directed edge from i to j
\def \incEdges{\delta}
\def\Lap{L}  %Graph Laplacian operator
\def \pose{x}
\def \tran{t}
\def \rot{R}
\newcommand{\true}[1]{\ubar{#1}}
\newcommand{\noisy}[1]{\tilde{#1}}
\newcommand{\est}[1]{\hat{#1}}
\def \tpose{\true{\pose}}
\def \ttran{\true{\tran}}
\def \trot{\true{\rot}}
\def \optsym{*}
\def \npose{\noisy{\pose}}
\def \ntran{\noisy{\tran}}
\def \nrot{\noisy{\rot}}
\def \transym{\tau}  %Symbol denoting translational measurements
\def \LapRotW{\Lap}  
\def \LapRotWO{\Lap^{o}}
\def \LapRotWC{\Lap^{c}}
\def \Sorbdist{d_{\mathcal{S}}}
\def \projsym{\Pi}
\newcommand{\rounded}[1]{\projsym(#1)}
\def \selection{\omega}
\def \EO{\Edges^{o}}
\def \EC{\Edges^{c}}
\def \Eopt{\Edges^{\optsym}}
\def \objectiveF{F}
\def \dualF{\objectiveF_{D}}
\def \optValAC{p^{\optsym}}
\def \City10K{\emph{City10K}}
\def \Intel{\emph{Intel}}
\def \AIS2Klinik{\emph{AIS2Klinik}}
\def \KittiTwo{\emph{KITTI 02}}
\def \KittiFive{\emph{KITTI 05}}
\def \MAC{MAC}
\begin{document}
\maketitle

\begin{abstract}
  Simultaneous localization and mapping (SLAM) is a critical capability in autonomous navigation, but in order to scale SLAM to the setting of ``lifelong'' SLAM, particularly under memory or computation constraints, a robot must be able to determine what information should be retained and what can safely be forgotten. In graph-based SLAM, the number of edges (measurements) in a pose graph determines both the memory requirements of storing a robot's observations and the computational expense of algorithms deployed for performing state estimation using those observations; both of which can grow unbounded during long-term navigation. To address this, we propose a \emph{spectral} approach for pose graph sparsification which maximizes the \emph{algebraic connectivity} of the sparsified measurement graphs, a key quantity which has been shown to control the estimation error of pose graph SLAM solutions. Our algorithm, MAC (for \emph{maximizing algebraic connectivity}), which is based on convex relaxation, is simple and computationally inexpensive, and admits formal \emph{post hoc} performance guarantees on the quality of the solutions it provides. In experiments on benchmark pose-graph SLAM datasets, we show that our approach quickly produces high-quality sparsification results which retain the connectivity of the graph and, in turn, the quality of corresponding SLAM solutions, as compared to a baseline approach which does not consider graph connectivity.
\end{abstract}

\section{Introduction}

The problem of simultaneous localization and mapping (SLAM), in which a robot
aims to jointly infer its pose and the location of environmental landmarks, is a
critical capability in autonomous navigation. However, as we aim to scale SLAM
algorithms to the setting of ``lifelong'' autonomy, particularly on compute- or
memory-limited platforms, a robot must be able to determine \emph{what
  information should be kept, and what can safely be forgotten}
\cite{rosen2021advances}. In particular, in the setting of graph-based SLAM and
rotation averaging, the number of edges in a measurement graph determines both
the memory required to store a robot's observations as well as the computation
time of algorithms employed for state estimation using this measurement graph.

While there has been substantial work on the topic of measurement pruning (or
\emph{sparsification}) in lifelong SLAM (e.g. \cite{CarlevarisBianco13iros,
  carlevaris2014conservative, Johannsson12rssw, kurz2021geometry,
  kretzschmar2012information}), most existing methods rely on heuristics for
sparsification whereby little can be said about the quality of the statistical
estimates obtained from the sparsified graph versus the original. Recent work on
performance guarantees in the setting of pose-graph SLAM and rotation averaging
identified the spectral properties---specifically the \emph{algebraic
  connectivity} (also known as the \emph{Fiedler value})---of the measurement
graphs encountered in these problems to be central objects of interest,
controlling not just the best possible expected performance (per earlier work on
Cram\'er-Rao bounds \cite{boumal2014cramer, chen2021cramer,
  khosoussi2014novel}), but also the \emph{worst-case} error of estimators
\cite{rosen2019se, doherty2022performance, doherty2022performanceSupplement}.
These observations suggest the algebraic connectivity as a natural measure of
graph quality for assessing SLAM graphs. This motivates our use of the algebraic
connectivity as an \emph{objective} in formulating the graph sparsification
problem.

Specifically, we propose a spectral approach to pose graph sparsification which
maximizes the algebraic connectivity of the measurement graph subject to a
constraint on the number of allowed edges.\footnote{Our method is related to,
  but should not be confused with, \emph{spectral sparsification}
  \cite{spielman2011spectral}. Similar to spectral sparsification, we aim to
  sparsify graphs in a way that preserves their spectral properties. However,
  our method differs in that we focus only on the algebraic connectivity,
  whereas traditionally spectral sparsification aims to preserve the entire
  graph Laplacian spectrum.} As we discuss, this corresponds to \emph{E-optimal}
design in the setting of pose-graph SLAM \cite{pukelsheim2006optimal}. This
specific problem turns out to be an instance of the \emph{maximum algebraic
  connectivity augmentation} problem, which is NP-Hard \cite{mosk2008maximum}.
To address this, we propose to solve a computationally tractable relaxation and
\emph{round} solutions obtained to the relaxed problem to approximate feasible
solutions of the original problem. Relaxations of this form have been considered
previously; in particular \citet{ghosh2006growing} developed a semidefinite
program relaxation to solve problems of the form we consider. However, these
techniques do not scale to the size of typical problems encountered in
graph-based SLAM. To this end, we propose a first-order optimization approach
that we show is practically fast for even quite large SLAM problems. Moreover,
we show that the \emph{dual} to our relaxation provides tractable, high-quality
bounds on the suboptimality of the solutions we provide \emph{with respect to
  the original problem}.

In summary, we present an approach for pose graph sparsification by maximizing
the \emph{algebraic connectivity} of the measurement graph, a key quantity which
has been shown to control the estimation error of pose-graph SLAM solutions. Our
method, based on convex relaxation, is \emph{simple} and \emph{computationally
  inexpensive}, and admits formal \emph{post hoc} performance guarantees on the
quality of the solutions it provides. In experiments on several benchmark
pose-graph SLAM datasets, we show that our approach quickly produces
high-quality sparsification results which retain the connectivity of the graph
and better preserve the quality of SLAM solutions compared to a baseline which
does not consider graph connectivity.

The remainder of this paper proceeds as follows: In Section
\ref{sec:related-work} we discuss recent results on the importance of algebraic
connectivity in the context of pose-graph SLAM, previous work on the maximum
algebraic connectivity augmentation problem, and prior work on network design
with applications in pose graph sparsification. In Section
\ref{sec:prob-formulation} we give relevant pose-graph SLAM background and a
formal description of the maximum algebraic connectivity augmentation problem.
In Section \ref{sec:approach}, we formulate a relaxation of the algebraic
connectivity maximization problem, present a first-order optimization approach
for solving the relaxation, and describe a simple rounding procedure for
obtaining approximate solutions to the pose graph sparsification problem from a
solution to the relaxed problem. In Section \ref{sec:experimental-results} we
demonstrate our approach on several common pose-graph SLAM benchmark datasets
and show that compared to a ``topology unaware'' baseline, our solutions provide
improved graph connectivity and improved accuracy of the maximum-likelihood
estimators employed to solve pose-graph SLAM using the sparsified graphs.

\section{Related Work}\label{sec:related-work}

\subsection{Importance of the algebraic connectivity in SLAM}\label{sec:alg-conn-slam}

The importance of algebraic connectivity in general has been observed since at
least 1973, with the seminal work of \citet{fiedler1973algebraic}. In robot
perception, the algebraic connectivity has appeared in the context of rotation
averaging \cite{boumal2014cramer}, linear SLAM problems and sensor network
localization \cite{khosoussi2014novel, khosoussi2019reliable}, and pose-graph
SLAM \cite{chen2021cramer} as a key quantity controlling estimation performance.
In particular, \citet{boumal2014cramer} observed that the inverse of the
algebraic connectivity bounds (up to constants) the Cram\'er-Rao lower bound on
the expected mean squared error for rotation averaging. More recently, it
appeared as the key quantity controlling the \emph{worst-case} error of
estimators applied to measurement graphs in pose-graph SLAM and rotation
averaging \cite{rosen2019se, doherty2022performance,
  doherty2022performanceSupplement} (where \emph{larger} algebraic connectivity
is associated with (statistically) lower error).

\subsection{Maximizing the algebraic connectivity}

The problem of maximizing the algebraic connectivity subject to cardinality
constraints has been considered previously for a number of related applications.
\citet{ghosh2006growing} consider a semidefinite program relaxation of the same
objective we consider. Alternatively, \citet{nagarajan2018maximizing} considered
a mixed-integer approach to optimize this objective. While our overall approach
can make use of any solution to the relaxation we consider, neither of these
methods scales to the types of problems we are considering. To the best of our
knowledge, this is the first time an approach has been proposed for pose graph
sparsification which makes use of \emph{any} approach to solving a convex (or
concave) relaxation.

\subsection{Network design and pose graph sparsification}

The theory of optimal experimental design (TOED) \cite{pukelsheim2006optimal}
gives several optimality criteria applicable to network design. Specifically,
A-optimality, T-optimality, E-optimality, and D-optimality are common criteria,
each of which corresponds to optimizing a different property of the information
matrix describing the distribution of interest (in SLAM, this is typically the
joint distribution over robot and landmark states). Briefly, A-optimal designs
minimize the trace of the inverse of the information matrix, D-optimal designs
maximize the determinant of the information matrix, E-optimal designs maximize
the smallest eigenvalue of the information matrix, and T-optimal designs
maximize the trace of the information matrix. \citet{chen2021cramer} discuss the
connections between Cram\'er-Rao bounds for pose-graph SLAM and the A-optimality
and T-optimality criteria. Historically, Cram\'er-Rao bounds and the optimal
design metrics arising from them have been popular tools for network design and
active SLAM \cite{khosoussi2014novel, chen2021cramer}, including, e.g. in
application to the planning of underwater inspection routes \cite{Kim09oceans}.

\citet{khosoussi2019reliable} established many of the first results for optimal
graph sparsification (i.e. measurement subset selection) in the setting of SLAM.
The convex relaxation they consider is perhaps the closest existing work in the
literature to ours. However, in contrast to our approach, they consider the
D-optimality criterion, while the results discussed in Section
\ref{sec:alg-conn-slam} strongly suggest that the quantity of interest with
regard to estimation performance is the algebraic connectivity, and therefore
the E-optimality criterion.\footnote{Of course, by maximizing the smallest
  eigenvalue, the E-optimality criterion also selects for information matrices
  with larger determinant.} More practically, the E-optimality criterion is both
less computationally expensive to compute \emph{and} to optimize.

Several methods have been proposed to reduce the \emph{number of states} which
need to be estimated in a SLAM problem (e.g. \cite{CarlevarisBianco13iros,
  carlevaris2014conservative, Johannsson12rssw, Huang13ECMRb}), typically by
marginalizing out state variables. This procedure is usually followed by an edge
pruning operation to mitigate the unwanted increase in graph density. Previously
considered approaches rely on linearization of measurement models at a
particular state estimate in order to compute approximate marginals and perform
subsequent pruning. Consequently, little can be said concretely about the
quality of the statistical estimates obtained from the sparsified graph compared
to the original graph. In contrast, our approach does not require linearization,
and provides explicit performance guarantees on the graph algebraic connectivity
as compared to the globally optimal algebraic connectivity (which is itself
linked to both the \emph{best} and \emph{worst} case performance of estimators
applied to the SLAM problem).

\section{Problem Formulation}\label{sec:prob-formulation}

We consider graph sparsification in the setting of pose-graph SLAM. Pose-graph
SLAM is the problem of estimating $n$ unknown values $\pose_i, \ldots, \pose_n
\in \SE(d)$ given a subset of measurements of their pairwise relative transforms
$\npose_{ij}$. This problem admits a natural graphical representation $\Graph
\triangleq (\Nodes, \dEdges)$ where nodes $\Nodes$ correspond to latent
variables $\pose_i \in \SE(d)$ and edges $(i,j) \in \dEdges$ correspond to noisy
measurements of relative poses $\npose_{ij}$. We adopt the following generative
model for rotation and translation measurements: For each edge $\dedge \in
\dEdges$:
\begin{subequations}
  \begin{equation}
    \nrot_{ij} = \trot_{ij}\rot_{ij}^\epsilon, \quad \rot_{ij}^{\epsilon} \sim \Langevin(I_d, \kappa_{ij})
  \end{equation}
  \begin{equation}
    \ntran_{ij} = \ttran_{ij} + \tran_{ij}^\epsilon, \quad \tran_{ij}^{\epsilon} \sim \Gaussian(0, \transym_{ij}^{-1}I_d),
  \end{equation}
  \label{eq:gen-model-pgo}
\end{subequations}%
where $\tpose_{ij} = (\ttran_{ij}, \trot_{ij})$ is the true value of
$\pose_{ij}$. That is, $\tpose_{ij} = \tpose_i^{-1}\tpose_j$ given the true
values of poses $\pose_i$ and $\pose_j$. Under this noise model, the typical
nonlinear least-squares formulation of maximum-likelihood estimation (MLE) for
$\SE(d)$ synchronization is written as follows:
\begin{problem}[MLE for $\SE(d)$ synchronization]
  \label{se-mle}
  \begin{equation}
    \min_{\substack{{\tran_i \in \R^d} \\ {\rot_i \in \SO(d)}}} \sum_{(i,j) \in \dEdges} \kappa_{ij} \| \rot_j - \rot_i\nrot_{ij}\|^2_F + \transym_{ij}\|t_j - t_i - R_i\ntran_{ij}\|_2^2.
  \end{equation}%
\end{problem}
Problem \ref{se-mle} also directly captures the problem of rotation averaging
under the Langevin noise model simply by taking all $\tau_{ij} = 0$
\cite{dellaert2020shonan}.

In prior work \cite{doherty2022performance, doherty2022performanceSupplement,
  rosen2019se}, we showed that the smallest (nonzero) eigenvalue of the
\emph{rotational weight Laplacian} $\LapRotW$ controls the worst-case error of
solutions to Problem \ref{se-mle}; this is the algebraic connectivity (or
Fiedler value) of the graph with nodes in correspondence with robot poses
$\pose_i$ and edge weights equal to each $\kappa_{ij}$. The corresponding
eigenvector attaining this value is called the \emph{Fiedler vector}. The
rotational weight Laplacian $\LapRotW$ is the $n \times n$ matrix with
$i,j$-entries:
\begin{equation}
  \LapRotW_{ij} = \begin{cases} \sum_{e \in \incEdges(i)} \kappa_e,& i = j, \\
    -\kappa_{ij},& \edge \in \Edges, \\
    0,& \edge \notin \Edges. \end{cases}
  \label{eq:laprotw}
\end{equation}
where $\incEdges(i)$ denotes the set of edges \emph{incident to} node $i$. The
Laplacian of a graph has several well-known properties that we will use here.
The Laplacian $\LapRotW$ of a graph can be written as a sum of the Laplacians of
the subgraphs induced by each of its edges. A Laplacian is always
positive-semidefinite, and the ``all ones'' vector $\mathds{1}$ of length $n$
is always in its kernel. Finally, a graph has positive algebraic connectivity
$\lambda_2(\LapRotW) > 0$ if and only if it is connected.\footnote{More
  specifically, the number of zero eigenvalues of a Laplacian is equal to the
  number of connected components of its corresponding graph.}

It will be convenient to partition the edges as $\Edges = \EO \cup \EC,\ \EO
\cap \EC = \emptyset$ into a \emph{fixed} set of edges $\EO$ and a set of $m$
\emph{candidate} edges $\EC$, and where $\LapRotWO$ and $\LapRotWC$ are the
Laplacians of the subgraphs induced by $\EO$ and $\EC$. For our purposes, the
subgraph induced by $\EO$ on $\Nodes$ will typically be constructed from
sequential odometric measurements (therefore, $|\EO| = n - 1$), but this is not
a requirement of our general approach.\footnote{In particular, to apply our
  approach we should select $\LapRotWO$ and $K$ to guarantee that the feasible
  set for Problem \ref{prob:max-aug-alg-conn} contains at least one tree. Then,
  it is clear that the optimization in Problem \ref{prob:max-aug-alg-conn} will
  always return a connected graph, since $\lambda_2(\LapRotW(\selection)) > 0$
  if and only if the corresponding graph is connected. Note that this condition
  is always easy to arrange: for example, we can start with $\LapRotWO$ a tree,
  as we do here, or (even more simply) take $\LapRotWO$ to be the zero matrix
  and simply take $K \geq n - 1$.} It will be helpful in the subsequent
presentation to ``overload'' the definition of $\LapRotW$. Specifically, let
$\LapRotW : \R^m \rightarrow \PSD_{n \times n}$ be the affine map constructing
the total graph Laplacian from a weighted combination of edges in $\EC$:
\begin{equation}
  \LapRotW(\selection) \triangleq \LapRotWO + \sum_{k = 1}^m \selection_{k}\LapRotWC_{k},
\end{equation}
where $\LapRotWC_k$ is the Laplacian of the subgraph induced by edge $e_k =
\{i_k, j_k\}$ of $\EC$. Our goal in this work will be to identify a subset of
$\Eopt \subseteq \EC$ of fixed size $|\Eopt| = K$ (equivalently, the edge
selection $\selection$), which maximizes the algebraic connectivity
$\lambda_2(\LapRotW(\selection))$. This corresponds to the following
optimization problem:
\begin{problem}[Algebraic connectivity maximization]
  \label{prob:max-aug-alg-conn}
  \begin{equation}
    \begin{gathered}
      \optValAC = \max_{\selection \in \{0,1\}^m} \lambda_{2}(\LapRotW(\selection)) \label{eq:connectivity-rot-only} \\
      |\selection | = K.
    \end{gathered}
  \end{equation}
\end{problem}

\section{Approach}\label{sec:approach}

Problem \ref{prob:max-aug-alg-conn} is a variant of the \emph{maximum algebraic
  connectivity augmentation problem}, which is NP-Hard \cite{mosk2008maximum}.
The difficulty of Problem \ref{prob:max-aug-alg-conn} stems, in particular, from
the integrality constraint on the elements of $\selection$. Consequently, our
general approach will be to solve a simpler problem obtained by relaxing the
integrality constraints of Problem \ref{prob:max-aug-alg-conn}, and, if
necessary, \emph{rounding} the solution to the relaxed problem to a solution in
the feasible set of Problem \ref{prob:max-aug-alg-conn}. In particular, we
consider the following \emph{Boolean relaxation} of Problem
\ref{prob:max-aug-alg-conn}:

\begin{problem}[Boolean Relaxation of Problem \ref{prob:max-aug-alg-conn}]
  \begin{equation}\label{eq:relaxation}
    \begin{gathered}
      \max_{\selection \in [0,1]^m} \lambda_2(\LapRotW(\selection)) \\
      \mathds{1}\transpose \selection = K.
    \end{gathered}
  \end{equation}
  \label{prob:relaxation}
\end{problem}

Relaxing the integrality constraints of Problem \ref{prob:max-aug-alg-conn}
dramatically alters the difficulty of the problem. In particular, we know (cf.
\cite{ghosh2006growing}):
\begin{lem}\label{lem:concavity}
  The function $\objectiveF(\selection) = \lambda_2(\LapRotW(\selection))$ is
  \emph{concave} on the set $\selection \in [0,1]^m,
  \mathds{1}\transpose\selection = K$.
\end{lem}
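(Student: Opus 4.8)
The plan is to read off concavity from the variational (Courant–Fischer) characterization of $\lambda_2$ as a minimum eigenvalue over a subspace, exploiting that $\LapRotW(\cdot)$ is an affine map. The starting observation is that, as recalled above, every graph Laplacian is positive semidefinite and annihilates the all-ones vector; applied to the affine combination $\LapRotW(\selection) = \LapRotWO + \sum_{k=1}^m \selection_k \LapRotWC_k$, this shows $\LapRotW(\selection)\mathds{1} = 0$ for \emph{all} $\selection$, so $\mathds{1}$ always spans an eigenspace associated with the smallest eigenvalue $0$. Consequently the Courant–Fischer theorem gives
\[
  \objectiveF(\selection) = \lambda_2(\LapRotW(\selection)) = \min_{\substack{x \in \R^n,\ \|x\|_2 = 1 \\ x\transpose \mathds{1} = 0}} x\transpose \LapRotW(\selection)\, x ,
\]
i.e.\ minimizing the Rayleigh quotient over the unit sphere of $\mathds{1}^\perp$ computes $\lambda_2$ \emph{exactly} (not merely an upper bound).

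The next step is to fix any admissible $x$ (with $\|x\|_2 = 1$, $x\transpose\mathds{1} = 0$) and note that $\selection \mapsto x\transpose \LapRotW(\selection) x = x\transpose \LapRotWO x + \sum_{k=1}^m \selection_k\, (x\transpose \LapRotWC_k x)$ is an \emph{affine} function of $\selection$, hence concave. Then $\objectiveF$ is a pointwise infimum, over the family of admissible $x$, of affine functions, and the pointwise infimum of an arbitrary family of concave functions is concave; so $\objectiveF$ is concave on all of $\R^m$, and in particular on the convex set $\{\selection \in [0,1]^m : \mathds{1}\transpose\selection = K\}$, which is what is claimed. An entirely equivalent route is to use the semidefinite representation $\lambda_2(\LapRotW(\selection)) = \max\{\gamma \in \R : \LapRotW(\selection) \succeq \gamma(I_n - \tfrac1n \mathds{1}\mathds{1}\transpose)\}$ and observe that this is a partial maximum, over $\gamma$, of the jointly concave (indeed linear) function $(\selection,\gamma) \mapsto \gamma$ on the convex set carved out by this linear matrix inequality; this phrasing also connects directly to the semidefinite formulations used later.

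There is no serious obstacle here; the only subtlety is the justification that restricting the Rayleigh quotient to $\mathds{1}^\perp$ yields $\lambda_2$ rather than just a bound, which is precisely where the Laplacian structure (positive semidefiniteness together with $\mathds{1}$ lying in the kernel of the smallest eigenvalue) is used — and this holds even when the graph corresponding to some feasible $\selection$ is disconnected, in which case both sides equal $0$. It is worth flagging that neither the cardinality constraint $\mathds{1}\transpose\selection = K$ nor the box constraints $\selection \in [0,1]^m$ enter the argument — concavity in fact holds globally on $\R^m$ — so the restriction to the feasible set in the statement merely matches the domain of Problem~\ref{prob:relaxation}.
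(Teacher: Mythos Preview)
Your argument is correct and is the standard one: write $\lambda_2(\LapRotW(\selection))$ via Courant--Fischer as the minimum of the Rayleigh quotient over $\mathds{1}^\perp$, observe that for each fixed unit vector $x\perp\mathds{1}$ the map $\selection\mapsto x\transpose\LapRotW(\selection)x$ is affine, and conclude concavity from the pointwise-infimum-of-affine-functions rule. The paper itself does not supply a proof of Lemma~\ref{lem:concavity}; it merely cites \cite{ghosh2006growing}, where essentially this same variational (equivalently, SDP) argument appears. So there is nothing substantive to compare against --- your write-up fills in exactly what the paper defers to the reference.

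One small caveat on your closing aside: the claim that concavity ``in fact holds globally on $\R^m$'' is not established by the argument you give. The identity
\[
  \lambda_2(\LapRotW(\selection)) \;=\; \min_{\substack{\|x\|_2=1\\ x\transpose\mathds{1}=0}} x\transpose\LapRotW(\selection)\,x
\]
requires that $0$ be the \emph{smallest} eigenvalue of $\LapRotW(\selection)$, i.e.\ that $\LapRotW(\selection)\succeq 0$. This holds whenever $\selection\geq 0$ (so certainly on $[0,1]^m$), but for $\selection$ with negative entries $\LapRotW(\selection)$ need not be positive semidefinite; in that case the eigenvector for the (negative) smallest eigenvalue lies in $\mathds{1}^\perp$, and the Rayleigh minimum over $\mathds{1}^\perp$ returns $\lambda_1$ rather than $\lambda_2$. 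This does not affect the lemma as stated, which concerns only the feasible set, but the parenthetical overstates what has been shown.
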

Consequently, solving Problem \ref{prob:relaxation}, then, amounts to maximizing
a concave function over a convex set; this is in fact a convex optimization
problem (one can see this by simply considering minimization of the objective
$-\objectiveF(\selection)$) and hence \emph{globally solvable} (see, e.g.
\cite{boyd2004convex, bertsekas2016nonlinear}). Since a solution to Problem
\ref{prob:relaxation} need not be feasible for the original problem, we then
\emph{round} solutions to the relaxed problem to their nearest correspondents in
the feasible set of Problem \ref{prob:max-aug-alg-conn}.

\subsection{Solving the relaxation}

\begin{small}
  \begin{algorithm}[t]
    \caption{MAC Algorithm \label{alg:mac}}
    \begin{algorithmic}[1]
      \Input An initial iterate $\selection$
      \Output An approximate solution to Problem \ref{prob:max-aug-alg-conn}
      \Function{MAC}{$\selection$}
      \State $\selection \leftarrow $\textsc{FrankWolfeAC}$(\selection)$
      \Comment{Solve Problem \ref{prob:relaxation}}
      \State \Return $\rounded{\selection}$
      \Comment{Round solution; eq. \eqref{eq:rounding}}
      \EndFunction
    \end{algorithmic}
  \end{algorithm}
\end{small}

\begin{small}
  \begin{algorithm}[t]
    \caption{Frank-Wolfe Method for Problem \ref{prob:relaxation} \label{alg:frank-wolfe}}
    \begin{algorithmic}[1]
      \Input An initial feasible iterate $\selection$
      \Output An approximate solution to Problem \ref{prob:relaxation}
      \Function{FrankWolfeAC}{$\selection$}
      \For {$t = 0, \dotsc, T$}
      \State Compute a Fiedler vector $y^*$ of $\LapRotW(\omega)$
      \State $\nabla \objectiveF(\omega)_k \leftarrow
      {y^*}\transpose\LapRotWC_k{y^*}, k = 1, \ldots, m$
      \Comment{Eq. \eqref{eq:supergradient}}
      \State $s_t \leftarrow \argmax_s s\transpose \nabla \objectiveF(\selection) $
      \Comment{Prob. \ref{prob:dir-subproblem}; eq. \eqref{eq:dir-subproblem-opt}}
      \State $\alpha \leftarrow 2 / (2 + t)$
      \Comment{Compute step size}
      \State $\selection \leftarrow \selection + \alpha \left( s_t - \selection \right)$
      \EndFor
      \State \Return $\selection$
      \EndFunction
    \end{algorithmic}
  \end{algorithm}
\end{small}

There are several methods which could, in principle, be used to solve the
relaxation in Problem \ref{prob:relaxation}. For example, Ghosh and Boyd
\cite{ghosh2006growing} consider solving an equivalent semidefinite program.
This approach has the advantage of fast convergence (in terms of the number of
iterations required to compute an optimal solution), but can nonetheless be slow
for the large problem instances ($m > 1000$) typically encountered in the SLAM
setting. Instead, our algorithm for \emph{maximizing algebraic connectivity}
(\MAC{}), summarized in Algorithm \ref{alg:mac}, employs an inexpensive
subgradient (more precisely, \emph{supergradient}) approach to solve Problem
\ref{prob:relaxation}, then \emph{rounds} its solution to the nearest element of
the feasible set for Problem
\ref{prob:max-aug-alg-conn}.\footnote{Supergradients are simply the concave
  analogue of subgradients; i.e., the tangent hyperplane formed by any
  supergradient of a concave function $\objectiveF$ must lie \emph{above}
  $\objectiveF$.}

In particular, \MAC{} uses the \emph{Frank-Wolfe method} (also known as the
\emph{conditional gradient method}), a classical approach for solving convex
optimization problems of the form in Problem \ref{prob:relaxation}
\cite{bertsekas2016nonlinear}. At each iteration, the Frank-Wolfe method
requires (1) linearizing the objective $\objectiveF$ at a particular
$\selection$, (2) maximizing the linearized objective over the (convex) feasible
set, and (3) taking a step in the direction of the solution to the linearized
problem. The remainder of this section gives a detailed exposition of our
adaptation of the Frank-Wolfe method to the problem of algebraic connectivity
maximization, which is summarized in Algorithm \ref{alg:frank-wolfe}.

The Frank-Wolfe method is particularly advantageous in this setting since the
feasible set for Problem \ref{prob:relaxation} is the intersection of the
hypercube with the linear subspace determined by $\mathds{1}\transpose
\selection = K$ (a linear equality constraint). Consequently this problem
amounts to solving a linear program, which can be done easily (and in fact, as
we show, admits a simple closed-form solution). In particular, the
\emph{direction-finding subproblem} for the Frank-Wolfe method is the following
linear program:
\begin{problem}[Direction-finding subproblem]\label{prob:dir-subproblem}
  Fix an iterate $\selection \in [0,1]^m,\ \mathds{1}\transpose \selection = K$. The
  direction-finding subproblem is to find the point $s$ solving the following
  linear program:
  \begin{equation}
    \begin{gathered}
      \max_{s \in [0,1]^m} s\transpose \nabla \objectiveF(\selection), \\
      \mathds{1}\transpose  s = K.
    \end{gathered}
  \end{equation}
\end{problem}
In order to compute the linearized objective in Problem
\ref{prob:dir-subproblem} we require a supergradient of the original objective
function (in the usual case where $\objectiveF$ is differentiable at
$\selection$, this is simply the gradient of $\objectiveF$). It turns out, we
can always recover a supergradient of $\objectiveF$ at a particular $\selection$
in terms of a Fiedler vector of $\LapRotW(\selection)$. Specifically, we have
the following theorem (which we prove in Appendix A):
\begin{thm}[Supergradients of $\objectiveF(\selection)$]\label{thm:supergradient}
  Let $y^*(\selection)$ be any eigenvector of $\LapRotW(\selection)$
  corresponding to $\lambda_2(\LapRotW(\selection))$. Then:
\begin{equation}\label{eq:supergradient}
  \begin{gathered}
    \nabla \objectiveF(\selection) = \left[ \frac{\partial \objectiveF}{\partial \selection_1}, \ldots, \frac{\partial \objectiveF}{\partial \selection_m} \right]\transpose, \\
    \frac{\partial F}{\partial \selection_k} = y^*(\selection)\transpose\LapRotWC_ky^*(\selection),
  \end{gathered}
\end{equation}
is a \emph{supergradient} of $\objectiveF$ at $\selection$.
\end{thm}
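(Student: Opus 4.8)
The plan is to reduce the statement to the elementary fact that the smallest eigenvalue of a symmetric matrix, restricted to a fixed subspace, is a concave function of the matrix with an explicit supergradient, and then transport this through the affine map $\selection \mapsto \LapRotW(\selection)$ by the chain rule for supergradients. First I would recall that every graph Laplacian annihilates the all-ones vector, so $\mathds{1} \in \ker \LapRotWO$ and $\mathds{1} \in \ker \LapRotWC_k$ for each $k$; hence $\mathds{1} \in \ker \LapRotW(\selection)$ for every $\selection$, and by the Courant--Fischer (Rayleigh--Ritz) theorem
\[
  \objectiveF(\selection) = \lambda_2\bigl(\LapRotW(\selection)\bigr) = \min_{\substack{\|y\| = 1 \\ y \perp \mathds{1}}} y\transpose \LapRotW(\selection)\, y .
\]
An eigenvector $y^*(\selection)$ of $\LapRotW(\selection)$ belonging to $\lambda_2(\LapRotW(\selection))$ is (or, in the case of a repeated eigenvalue, may be taken) orthogonal to the $\lambda_1 = 0$ eigenvector $\mathds{1}$, so after normalizing $\|y^*(\selection)\| = 1$ it is an \emph{optimal} point of this Rayleigh quotient, attaining the value $\objectiveF(\selection)$.

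Next, for an arbitrary competing point $\selection'$, I would substitute $y^*(\selection)$ into the Rayleigh quotient associated with $\LapRotW(\selection')$. Since $y^*(\selection)$ is merely feasible (unit norm, orthogonal to $\mathds{1}$) and not necessarily optimal for $\selection'$, Courant--Fischer gives $\objectiveF(\selection') \le y^*(\selection)\transpose \LapRotW(\selection')\, y^*(\selection)$. Expanding the right-hand side using the affinity of $\LapRotW$, i.e. $\LapRotW(\selection') = \LapRotW(\selection) + \sum_{k=1}^m (\selection'_k - \selection_k)\LapRotWC_k$, yields
\[
  \objectiveF(\selection') \;\le\; \objectiveF(\selection) + \sum_{k=1}^m (\selection'_k - \selection_k)\, y^*(\selection)\transpose \LapRotWC_k\, y^*(\selection) \;=\; \objectiveF(\selection) + \bigl\langle \nabla \objectiveF(\selection),\, \selection' - \selection \bigr\rangle,
\]
where $\nabla \objectiveF(\selection)$ is exactly the vector of \eqref{eq:supergradient} (using $\tr(\LapRotWC_k\, y^* {y^*}\transpose) = {y^*}\transpose \LapRotWC_k\, y^*$). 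This is precisely the defining supergradient inequality for the concave function $\objectiveF$ at $\selection$ — and, since no constraint on $\selection'$ was used, it in fact holds for all $\selection'$, incidentally re-proving Lemma~\ref{lem:concavity}. The inequality is invariant under rescaling of $y^*(\selection)$, so the unit normalization is only for bookkeeping.

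I do not expect a serious obstacle; the one point requiring care is the interaction with the kernel direction $\mathds{1}$. Specifically, I must ensure that the eigenvector $y^*(\selection)$ used to build the supergradient is a legitimate competitor in the variational characterization of $\lambda_2$ — i.e. orthogonal to $\mathds{1}$ — rather than the trivial $\lambda_1 = 0$ eigenvector. This is automatic when $\lambda_2(\LapRotW(\selection))$ is simple and positive, which holds throughout the feasible set of Problem~\ref{prob:relaxation} whenever $\LapRotWO$ and $K$ are chosen so that feasible graphs are connected (per the discussion above); in the repeated-eigenvalue case one simply restricts attention to eigenvectors in $\mathds{1}^\perp$, which after removing the $\mathds{1}$ component still span the $\lambda_2$-eigenspace that matters for the Rayleigh quotient. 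The remaining ingredients — Courant--Fischer, the affine expansion of $\LapRotW$, and the identity relating the quadratic form to the trace inner product — are routine.
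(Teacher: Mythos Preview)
Your proposal is correct and follows essentially the same approach as the paper: both arguments use the variational (Rayleigh--Ritz) characterization of $\lambda_2$ over $\mathds{1}^\perp$, substitute the Fiedler vector at $\selection$ as a feasible competitor at $\selection'$, and expand using the affine structure of $\LapRotW$. Your presentation is slightly more streamlined (expanding $\LapRotW(\selection') - \LapRotW(\selection)$ directly rather than the paper's ``add zero'' manipulation) and is more explicit about the orthogonality-to-$\mathds{1}$ caveat, but the substance is identical.
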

Therefore, supergradient computation can be performed by simply recovering an
eigenvector of $\LapRotW(\selection)$ corresponding to
$\lambda_2(\LapRotW(\selection))$.

Problem \ref{prob:dir-subproblem} is a linear program, for which several
solution techniques exist \cite{bertsekas2016nonlinear}. However, in our case,
Problem \ref{prob:dir-subproblem} admits a simple, \emph{closed-form} solution
$s^*$ attaining its optimal value (which we prove in Appendix B):
\begin{thm}[A closed-form solution to Problem \ref{prob:dir-subproblem}]\label{thm:dir-subproblem}
  Let $\mathcal{S}^*,\ |\mathcal{S}^*| = K$ be the set containing the indices of
  the $K$ \emph{largest} elements of $\nabla \objectiveF(\selection)$, breaking
  ties arbitrarily where necessary. The vector $s^* \in \R^n$ with element $k$
  given by:
  \begin{equation}\label{eq:dir-subproblem-opt}
    s^*_k = \begin{cases} 1, &  k \in \mathcal{S}^*, \\
      0, &  \text{otherwise,} \end{cases}
  \end{equation}
  is an optimizer for Problem \ref{prob:dir-subproblem}.
\end{thm}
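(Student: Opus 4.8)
The plan is to prove optimality of $s^*$ by a direct \emph{exchange argument}, bypassing any appeal to LP vertex enumeration. First I would note that the feasible set $\{ s \in [0,1]^m : \mathds{1}\transpose s = K \}$ is nonempty and compact (by construction $1 \le K \le m$), so a maximizer of the continuous linear objective $s \mapsto s\transpose \nabla\objectiveF(\selection)$ exists; and $s^*$ is feasible, since it has exactly $K$ unit entries, all lying in $\{0,1\} \subseteq [0,1]$. Abbreviate $g \triangleq \nabla\objectiveF(\selection) \in \R^m$, and let $\mathcal{S}^*$ be the chosen index set of the $K$ largest entries of $g$, so that $g_k \ge g_\ell$ whenever $k \in \mathcal{S}^*$ and $\ell \notin \mathcal{S}^*$; this inequality is exactly the content of ``$K$ largest elements, breaking ties arbitrarily.'' Put $\gamma \triangleq \min_{k \in \mathcal{S}^*} g_k$, so $\gamma \ge \max_{\ell \notin \mathcal{S}^*} g_\ell$.

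Next I would fix an arbitrary feasible $s$ and bound the gap $(s^*)\transpose g - s\transpose g$. The key bookkeeping identity is that, since $\mathds{1}\transpose s = K = |\mathcal{S}^*| = \mathds{1}\transpose s^*$, the mass removed from $\mathcal{S}^*$ equals the mass placed outside $\mathcal{S}^*$:
\[
\delta \triangleq \sum_{k \in \mathcal{S}^*} (1 - s_k) \;=\; \sum_{\ell \notin \mathcal{S}^*} s_\ell \;\ge\; 0 .
\]
With this, decompose
\[
(s^*)\transpose g - s\transpose g \;=\; \sum_{k \in \mathcal{S}^*} (1 - s_k)\, g_k \;-\; \sum_{\ell \notin \mathcal{S}^*} s_\ell\, g_\ell .
\]
Because $1 - s_k \ge 0$ and $g_k \ge \gamma$ for every $k \in \mathcal{S}^*$, the first sum is at least $\gamma\delta$; because $s_\ell \ge 0$ and $g_\ell \le \gamma$ for every $\ell \notin \mathcal{S}^*$, the second sum is at most $\gamma\delta$. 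Hence $(s^*)\transpose g - s\transpose g \ge \gamma\delta - \gamma\delta = 0$, i.e.\ $s^*$ attains the maximum over the feasible set, which is the claim.

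I do not anticipate a substantive obstacle here: the argument is elementary. The only points requiring a little care are (i) the tie-breaking at the rank-$K$ boundary, which is handled cleanly by defining $\gamma$ as the minimum of $g$ over $\mathcal{S}^*$ rather than via a strict rank cutoff, and (ii) the degenerate case $K = m$, where $s^* = \mathds{1}$ is the unique feasible point and is covered by the same inequality chain (with $\delta = 0$). If desired, one can additionally remark that when all entries of $g$ are distinct and $0 < K < m$, the inequality above is strict for any feasible $s \neq s^*$, so $s^*$ is then the unique optimizer; this is not needed for the stated result, since the Frank--Wolfe update only requires \emph{an} optimizer of the direction-finding subproblem.
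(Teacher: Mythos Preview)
Your proof is correct and complete. The exchange argument via the mass-conservation identity $\sum_{k \in \mathcal{S}^*}(1 - s_k) = \sum_{\ell \notin \mathcal{S}^*} s_\ell$ is airtight, and your handling of ties through $\gamma = \min_{k \in \mathcal{S}^*} g_k$ is the right way to do it.

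The paper takes a genuinely different route. It first observes that, because each $\LapRotWC_k \succeq 0$, every component $\nabla\objectiveF(\selection)_k = y^*(\selection)\transpose \LapRotWC_k\, y^*(\selection)$ is nonnegative, so the objective $\sum_k s_k\,\nabla\objectiveF(\selection)_k$ is a sum of nonnegative terms with each $s_k \in [0,1]$; from there it asserts that, subject to $\sum_k s_k = K$, the maximum is attained by setting $s_k = 1$ on the $K$ largest components. Your argument, by contrast, never uses nonnegativity of $g$: it works for an arbitrary cost vector, and it actually supplies the inequality $(s^*)\transpose g \ge s\transpose g$ rather than appealing to an ``it follows directly'' step. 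What the paper's approach buys is a problem-specific intuition (you are distributing a budget of $K$ over nonnegative returns, each capped at $1$), at the cost of a slightly informal final step; what your approach buys is a self-contained, fully rigorous proof that is also strictly more general, since it would apply unchanged even if some gradient components were negative.
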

In this work, we use a simple decaying step size $\alpha$ to update $\selection$
in each iteration. While in principle, we could instead use a line search method
\cite[Sec. 2.2]{bertsekas2016nonlinear}), this would potentially require many
evaluations of $\objectiveF(\selection)$ within each iteration. Since every
evaluation of $\objectiveF(\selection)$ requires an eigenvalue computation, this
can become a computational burden for large problems.

In the event that the optimal solution to the relaxed problem is integral, we
ensure that we have \emph{also} obtained an optimal solution to the original
problem. However, this need not be the case in general. In the (typical) case
where integrality does not hold, we \emph{project} the solution to the relaxed
problem onto the original constraint set. In this case, an integral solution
$\rounded{\selection}$ can be obtained by rounding the largest $K$ components of
$s$ to 1, and setting all other components to zero:
\begin{equation}\label{eq:rounding}
  \rounded{\selection}_k \triangleq \begin{cases} 1, & \text{if $\selection_k$ is in the largest $K$ elements of $\selection$}, \\
  0, & \text{otherwise}. \end{cases}
\end{equation}

In general, the Frank-Wolfe algorithm offers \emph{sublinear} (i.e.
$\mathcal{O}(1/T)$ after $T$ iterations) convergence to the globally optimal
solution in the worst case \cite{dunn1978conditional}. However, in this context
it has several advantages over alternative approaches. First, we can bound the
sparsity of a solution after $T$ iterations. In particular, we know that the
solution after $T$ iterations has \emph{at most} $KT$ nonzero entries. Second,
the gradient computation requires only a single computation of the minimal $2$
dimensional eigenspace of an $n \times n$ matrix. This can be performed quickly
using a variety of methods (e.g. the preconditioned Lanczos method). Finally, as
we showed, the direction-finding subproblem in Problem \ref{prob:dir-subproblem}
admits a simple \emph{closed-form solution} (as opposed to a projected gradient
method which requires projection onto an $\ell_1$-ball). In consequence, despite
the fact that gradient-based methods may require many iterations to converge to
\emph{globally optimal} solutions, high-quality approximate solutions can be
computed fast at the scale necessary for SLAM problems. As we show in the
following section, our approach admits \emph{post hoc} suboptimality guarantees
even in the event that we terminate optimization prematurely (e.g. when a
\emph{fast} but potentially suboptimal solution is required). Critically, these
suboptimality guarantees ensure the quality of the solutions of our approach not
only with respect to the relaxation, but also with respect to the \emph{original
  problem}.

\subsection{Post-hoc Suboptimality Guarantees}

Algorithm \ref{alg:frank-wolfe} admits several \emph{post hoc} suboptimality
guarantees. Let $\optValAC$ be the optimal value of the original nonconvex
maximization in Problem \ref{prob:max-aug-alg-conn}. Since Problem
\ref{prob:relaxation} is a relaxation of Problem \ref{prob:max-aug-alg-conn}, in
the event that optimality attains for a vector $\selection^*$, we know:
\begin{equation}
  \objectiveF(\rounded{\selection^*}) \leq \optValAC \leq \objectiveF(\selection^*).
\end{equation}
Therefore, the suboptimality of a rounded solution $\rounded{\selection^*}$ is bounded
as follows:
\begin{equation}
  \optValAC - \objectiveF(\rounded{\selection^*}) \leq \objectiveF(\selection^*) - \objectiveF(\rounded{\selection^*}).
\end{equation}
Consequently, in the event that $\objectiveF(\selection^*) -
\objectiveF(\rounded{\selection^*}) = 0$, we know that $\rounded{\selection^*}$
\emph{must } correspond to an optimal solution to Problem
\ref{prob:max-aug-alg-conn}.

The above guarantees apply in the event that we obtain a \emph{maximizer}
$\selection^*$ of Problem \ref{prob:relaxation}. This would seem to pose an
issue if we aim to terminate optimization before we obtain a verifiable,
globally optimal solution to Problem \ref{prob:relaxation} (e.g. in the presence
of real-time constraints). Since these solutions are not necessarily globally
optimal in the relaxation, we do not know if their objective value is larger or
smaller than the optimal solution to Problem \ref{prob:max-aug-alg-conn}.
However, we can in fact obtain per-instance suboptimality guarantees of the same
kind for \emph{any} estimate $\est{\selection}$ through the \emph{dual} of our
relaxation (cf. \citet[Appendix D]{lacoste2013block}). Here, we give a
derivation of the dual upper bound which uses only the concavity of
$\objectiveF$.

Since $\objectiveF$ is concave, for any $x, y \in [0,1]^m,\
\mathds{1}\transpose x = \mathds{1}\transpose y = K$ we have:
\begin{equation}
  \objectiveF(y) \leq \objectiveF(x) + (y - x)\transpose \nabla \objectiveF(x).
\end{equation}
Consider then the following upper bound:
\begin{equation}\label{eq:dual-bound-deriv}
  \begin{aligned}
    \objectiveF(\selection^*) &\leq \objectiveF(\est{\selection}) + (\selection^* - \est{\selection})\transpose \nabla \objectiveF(\est{\selection}) \\
    &\leq \max_{s \in [0,1]^m, \mathds{1}\transpose s = K} \objectiveF(\est{\selection}) + (s - \est{\selection})\transpose \nabla \objectiveF(\est{\selection}) \\
    &= \objectiveF(\est{\selection}) - \est{\selection}\transpose \nabla \objectiveF(\est{\selection}) + \max_{s \in [0,1]^m, \mathds{1}\transpose s = K} s\transpose \nabla \objectiveF(\est{\selection}).
  \end{aligned}
\end{equation}
We observe that the solution to the optimization in the last line of
\eqref{eq:dual-bound-deriv} is \emph{exactly} the solution to the
direction-finding subproblem (Problem \ref{prob:dir-subproblem}). Letting
$\est{s}$ be a vector obtained as a solution to Problem
\ref{prob:dir-subproblem} at $\est{\selection}$, we obtain the following
\emph{dual} upper bound:
\begin{equation}\label{eq:dual-upper-bound}
  \dualF(\est{\omega}) \triangleq \objectiveF(\est{\selection}) + \nabla \objectiveF(\est{\selection})\transpose (\est{s} - \est{\selection}). \\
\end{equation}
Now, from \eqref{eq:dual-bound-deriv}, we have $\dualF(\selection) \geq
\objectiveF(\selection^*)$ for any $\selection$ in the feasible set. In turn, it
is straightforward to verify that the following chain of inequalities hold for
any estimator $\est{\omega}$ in the feasible set of the Boolean relaxation:
\begin{equation}
  \objectiveF(\rounded{\est{\selection}}) \leq \optValAC \leq \dualF(\est{\selection}),
\end{equation}
with the corresponding suboptimality guarantee:
\begin{equation}
  \optValAC - \objectiveF(\rounded{\est{\selection}}) \leq \dualF(\est{\selection}) - \objectiveF(\rounded{\est{\selection}}).
\end{equation}
Moreover, we can always recover a suboptimality bound on $\est{\selection}$ with
respect to the optimal value $\objectiveF(\selection^*)$ to relaxed problem as:
\begin{equation} \label{eq:duality-gap-as-bound}
 \objectiveF(\selection^*) - \objectiveF(\est{\selection}) \leq \dualF(\est{\selection}) - \objectiveF(\est{\selection})
\end{equation}
The expression appearing on the right-hand side of
\eqref{eq:duality-gap-as-bound} is the (Fenchel) \emph{duality gap}. Equation
\eqref{eq:duality-gap-as-bound} also motivates the use of the duality gap as a
stopping criterion for Algorithm \ref{alg:frank-wolfe}: if the gap is
sufficiently close to zero (e.g. to within a certain numerical tolerance), we
may conclude that we have reached an optimal solution $\selection^*$ to Problem
\ref{prob:relaxation}.

\begin{figure*}[t]
  \centering
  \begin{subfigure}{1.0\linewidth}
  \includegraphics[width=0.2\linewidth]{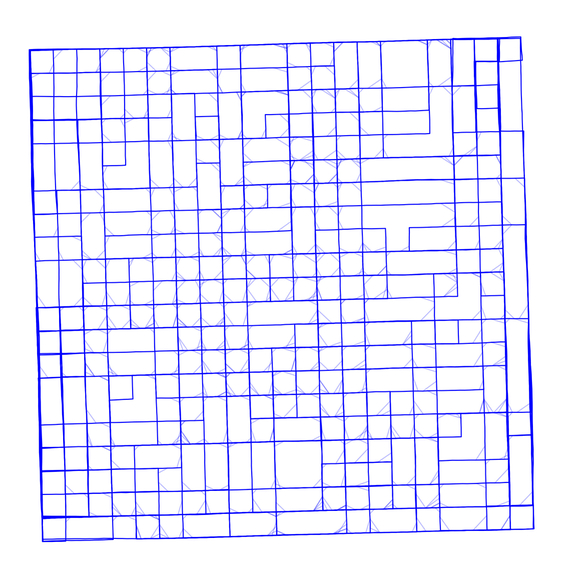}%
  \includegraphics[width=0.2\linewidth]{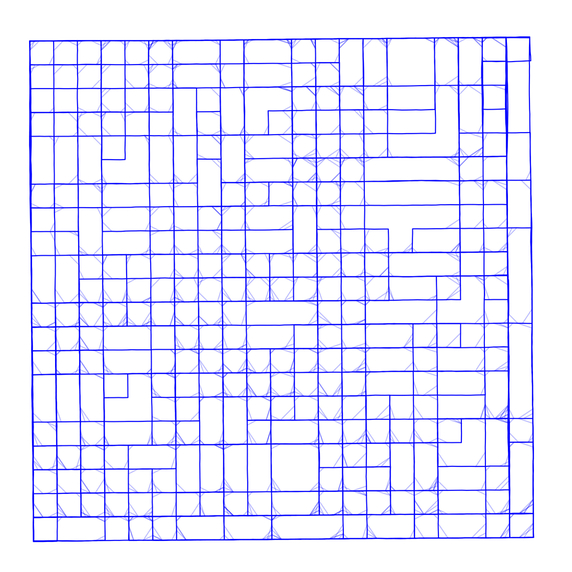}%
  \includegraphics[width=0.2\linewidth]{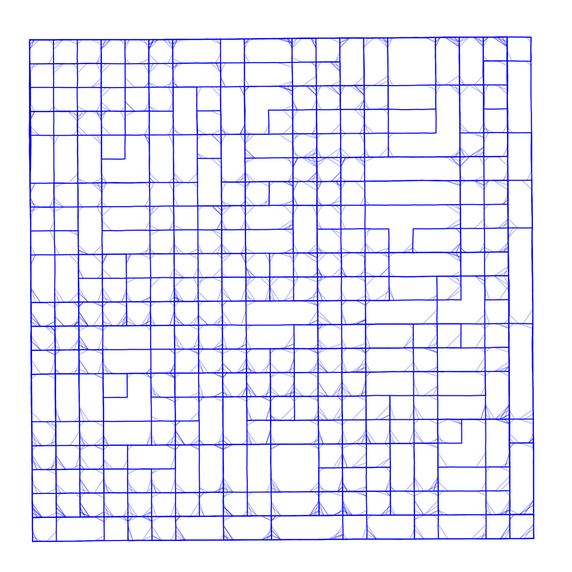}%
  \includegraphics[width=0.2\linewidth]{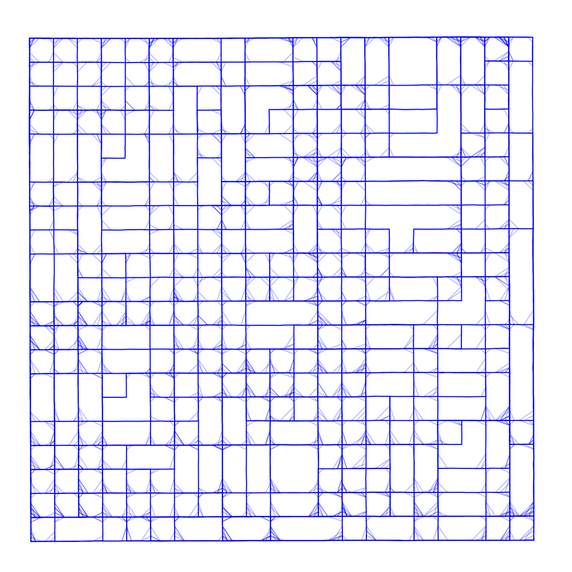}%
  \includegraphics[width=0.2\linewidth]{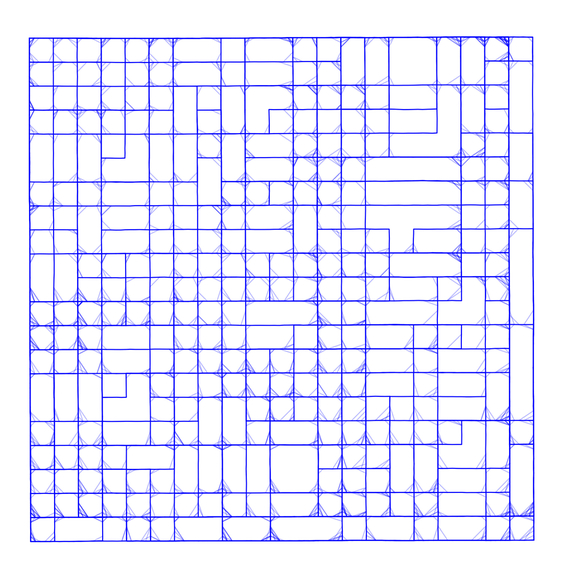}
  \caption{Our approach.}
  \end{subfigure}

  \begin{subfigure}{1.0\linewidth}
    \includegraphics[width=0.2\linewidth]{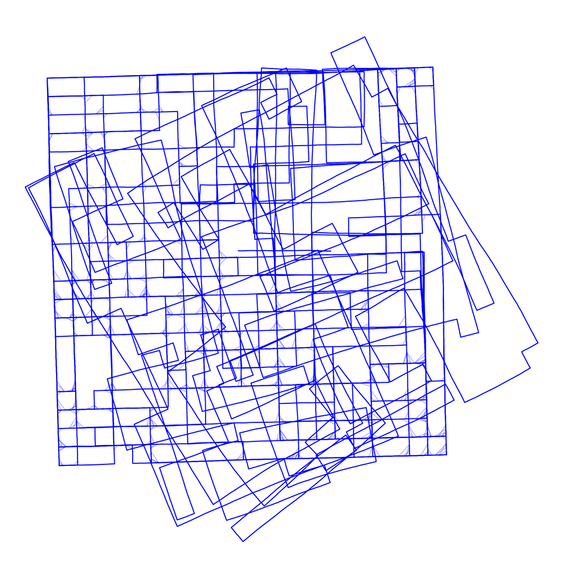}%
    \includegraphics[width=0.2\linewidth]{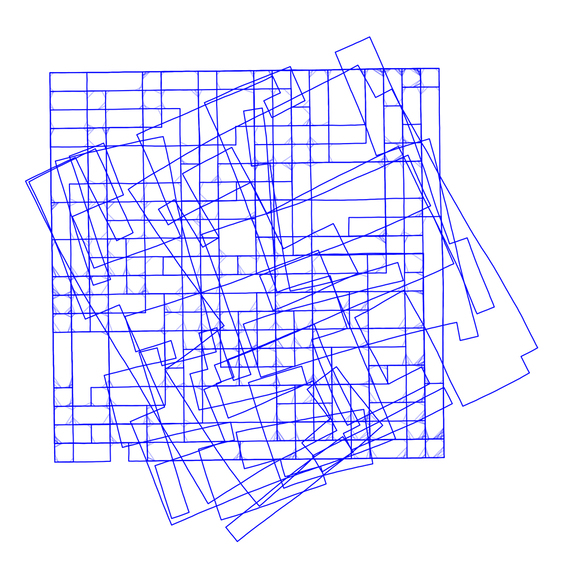}%
    \includegraphics[width=0.2\linewidth]{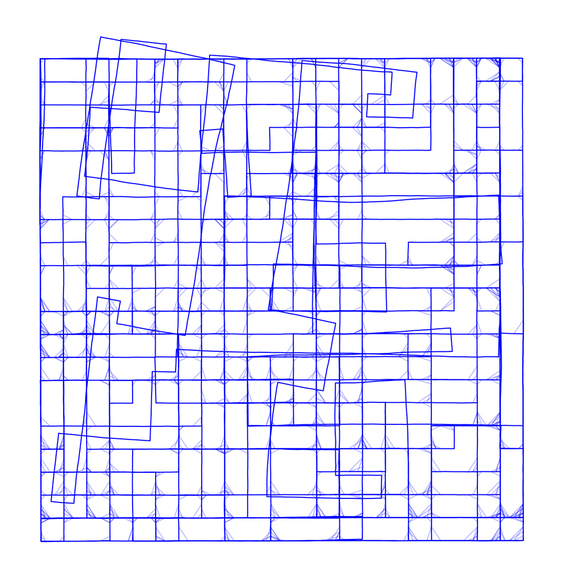}%
    \includegraphics[width=0.2\linewidth]{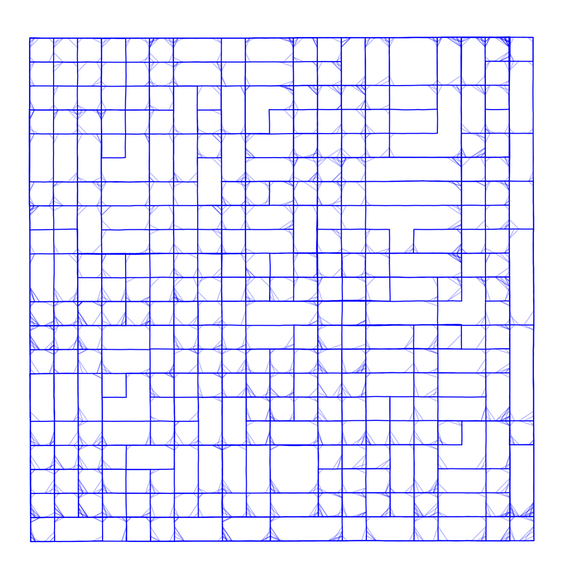}%
    \includegraphics[width=0.2\linewidth]{smaller_figs/ours_city_1.0.jpg}
    \caption{Na\"ive baseline.}
  \end{subfigure}
  \caption{\textbf{Qualitative results for pose-graph sparsification}. Pose-graph
      optimization results for the \City10K{} dataset with varying degrees of
      sparsity using (a) our method and (b) a na\"ive baseline which selects the
      most certain measurements. Left to right: 20\%, 40\%, 60\%, 80\%, and
      100\% of the candidate edges. \label{fig:qualitative}}
\end{figure*}

\section{Experimental Results}\label{sec:experimental-results}

\begin{figure*}
  \centering
  \begin{subfigure}{1.0\linewidth}
    \centering
    \includegraphics[width=0.25\linewidth]{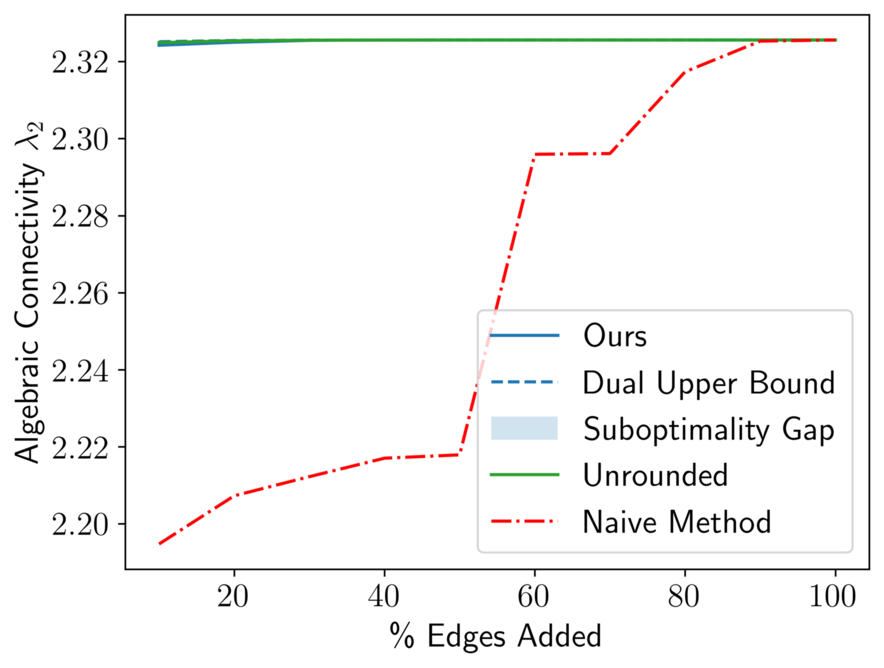}%
    \includegraphics[width=0.25\linewidth]{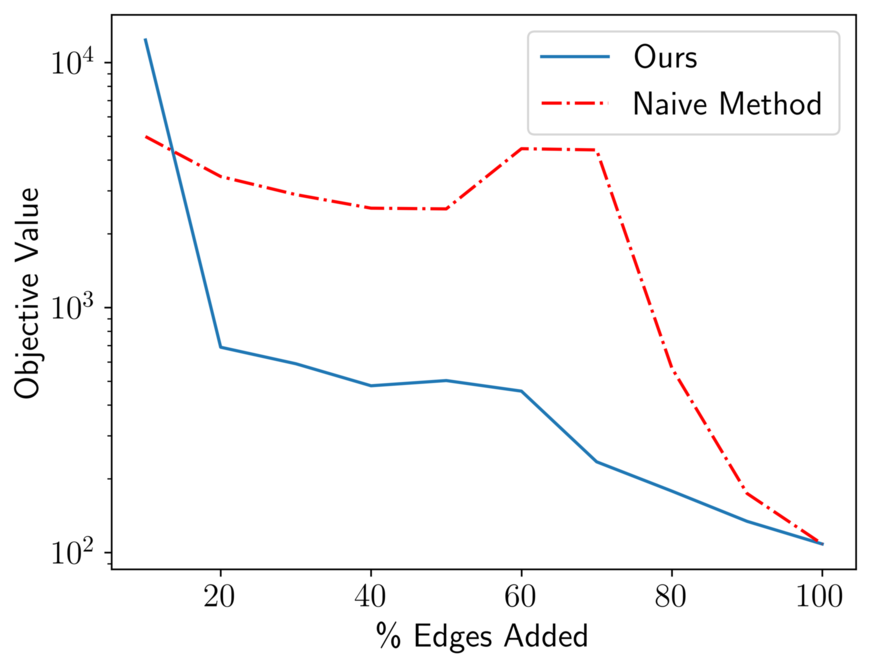}%
    \includegraphics[width=0.25\linewidth]{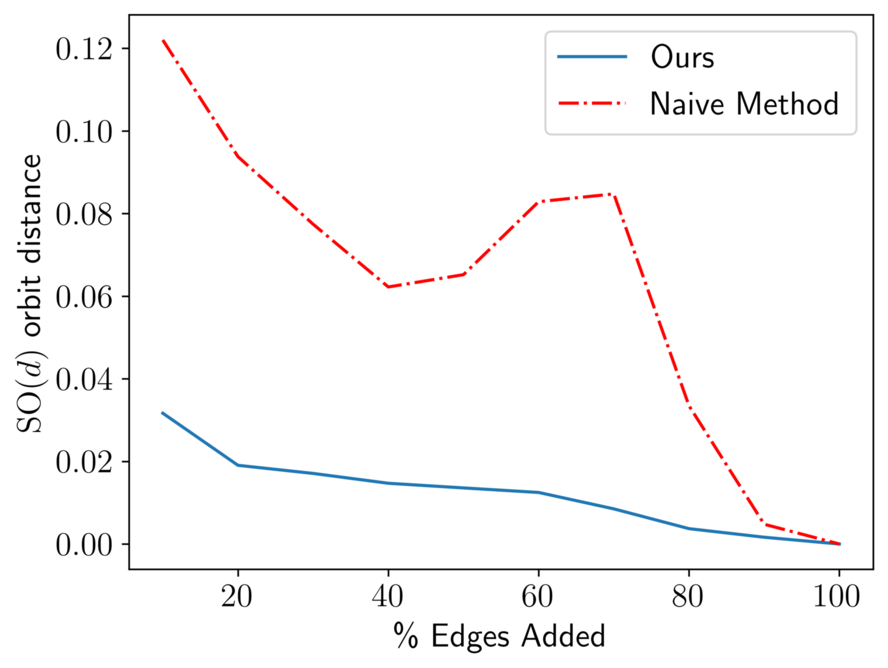}%
    \includegraphics[width=0.25\linewidth]{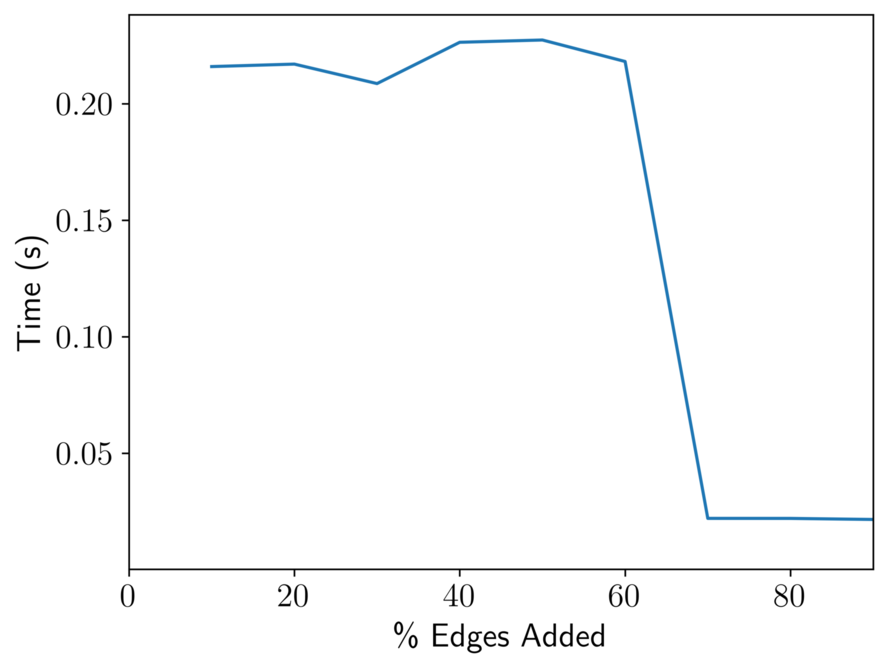}%
    \caption{\KittiTwo{}\\ \label{fig:quantitative:kitti02}}
  \end{subfigure}
  \begin{subfigure}{1.0\linewidth}
    \centering
    \includegraphics[width=0.25\linewidth]{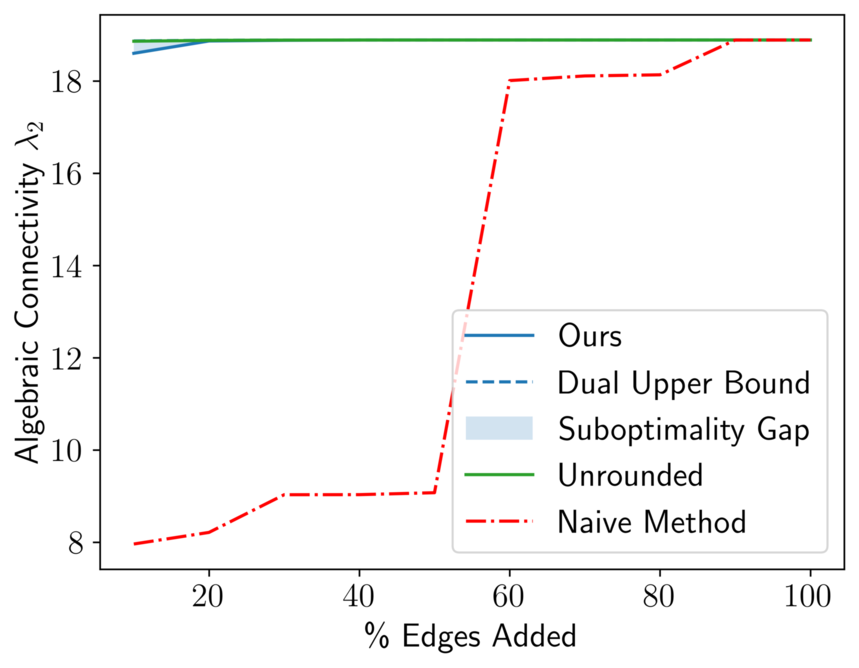}%
    \includegraphics[width=0.25\linewidth]{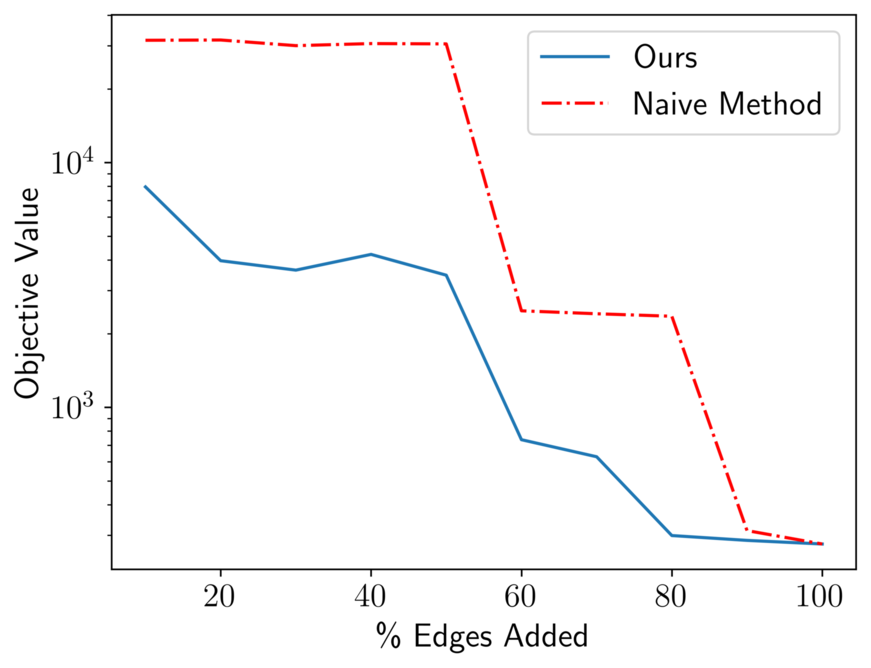}%
    \includegraphics[width=0.25\linewidth]{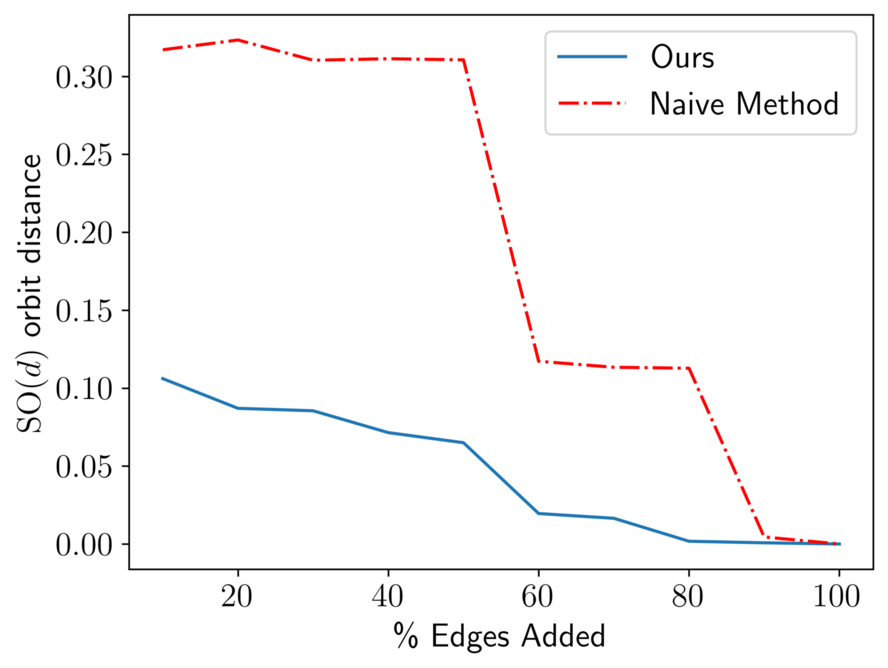}%
    \includegraphics[width=0.25\linewidth]{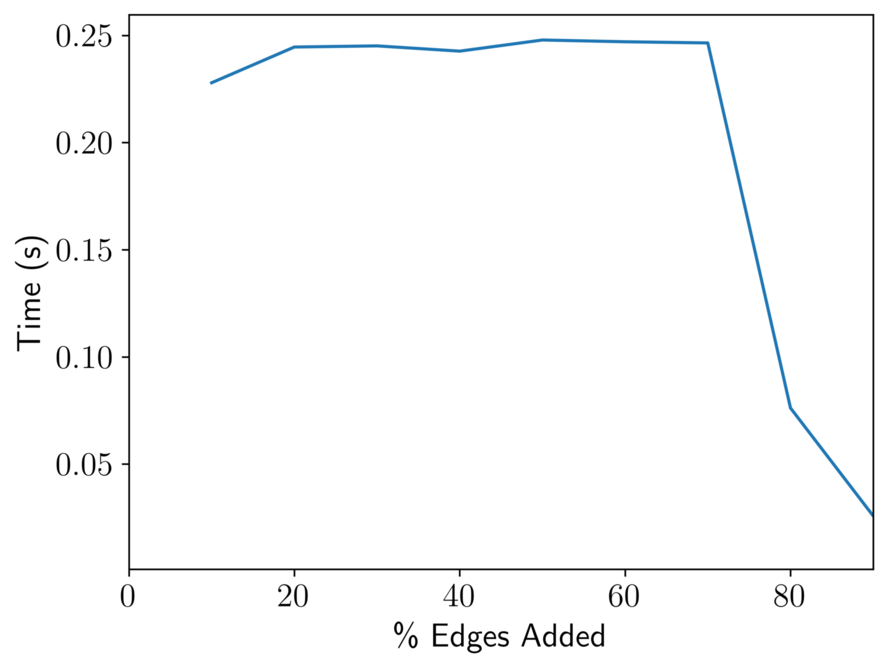}%
    \caption{\KittiFive{}\\ \label{fig:quantitative:kitti05}}
  \end{subfigure}
  \begin{subfigure}{1.0\linewidth}
    \centering
    \includegraphics[width=0.25\linewidth]{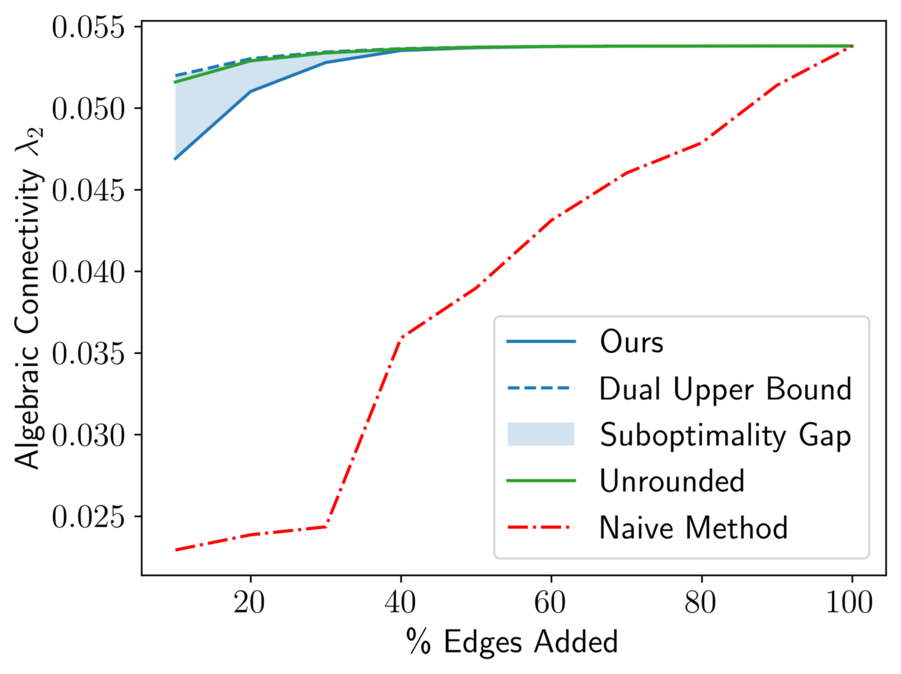}%
    \includegraphics[width=0.25\linewidth]{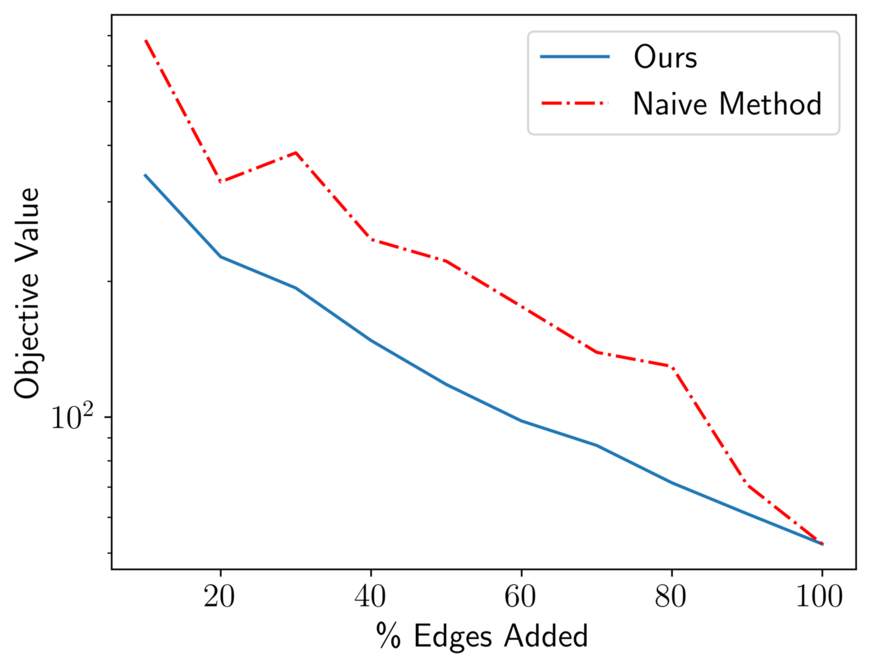}%
    \includegraphics[width=0.25\linewidth]{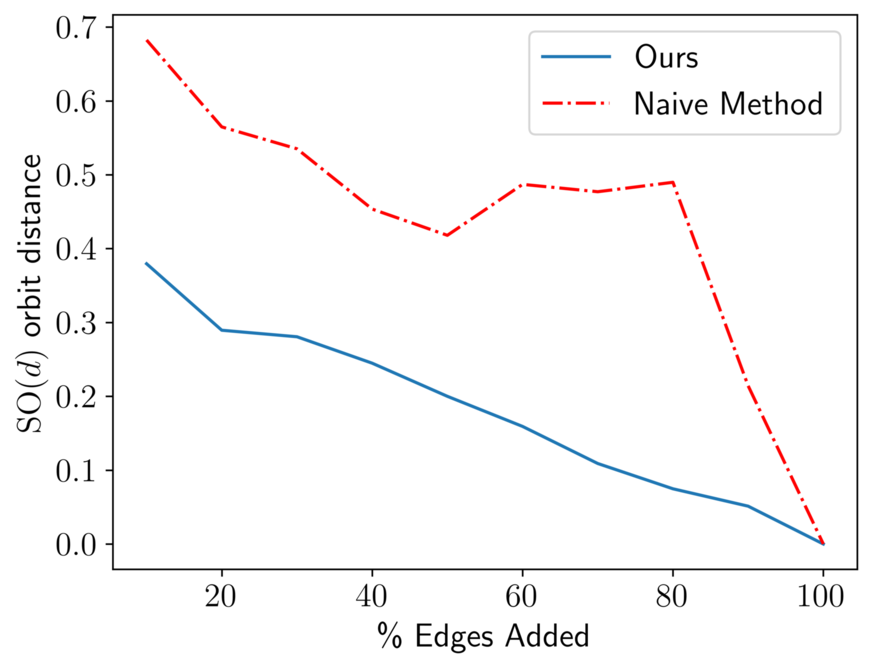}%
    \includegraphics[width=0.25\linewidth]{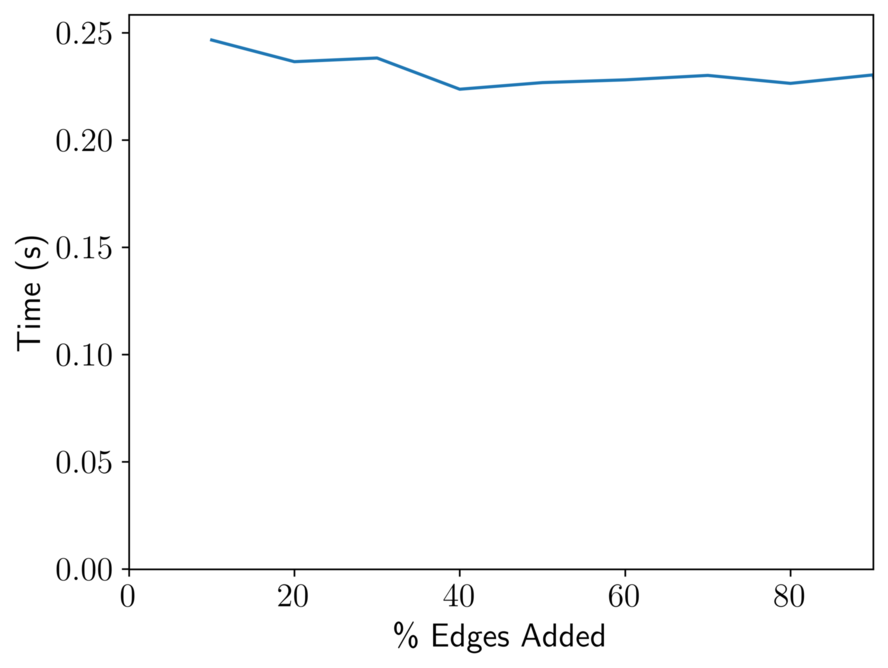}%
    \caption{\Intel{}\\ \label{fig:quantitative:intel}}
  \end{subfigure}
  \begin{subfigure}{1.0\linewidth}
    \centering
    \includegraphics[width=0.25\linewidth]{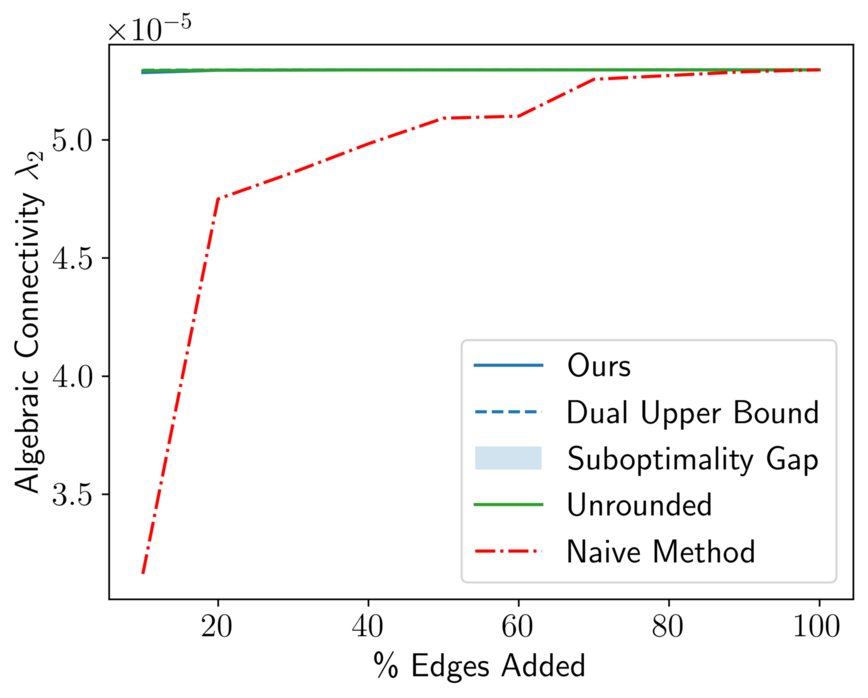}%
    \includegraphics[width=0.25\linewidth]{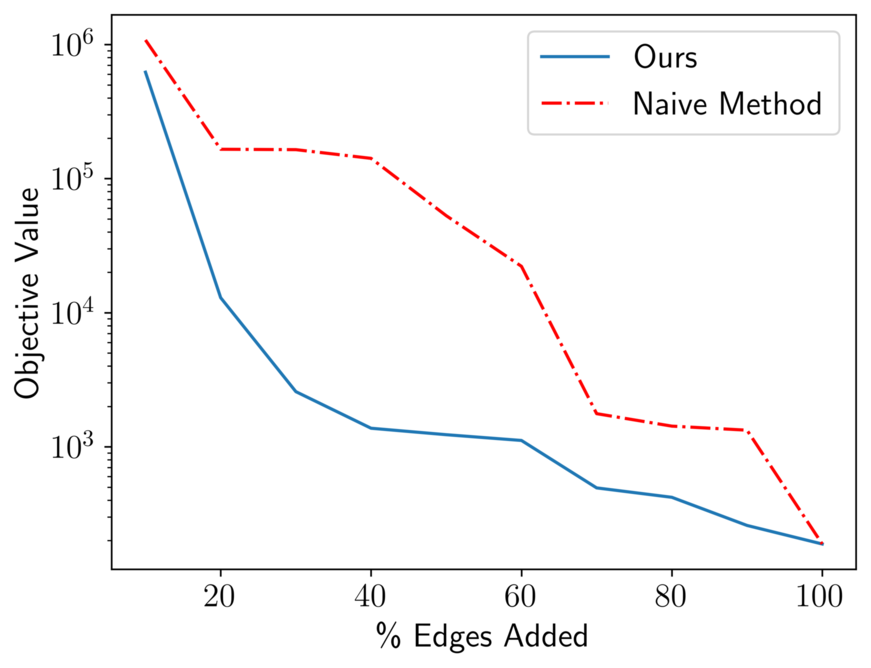}%
    \includegraphics[width=0.25\linewidth]{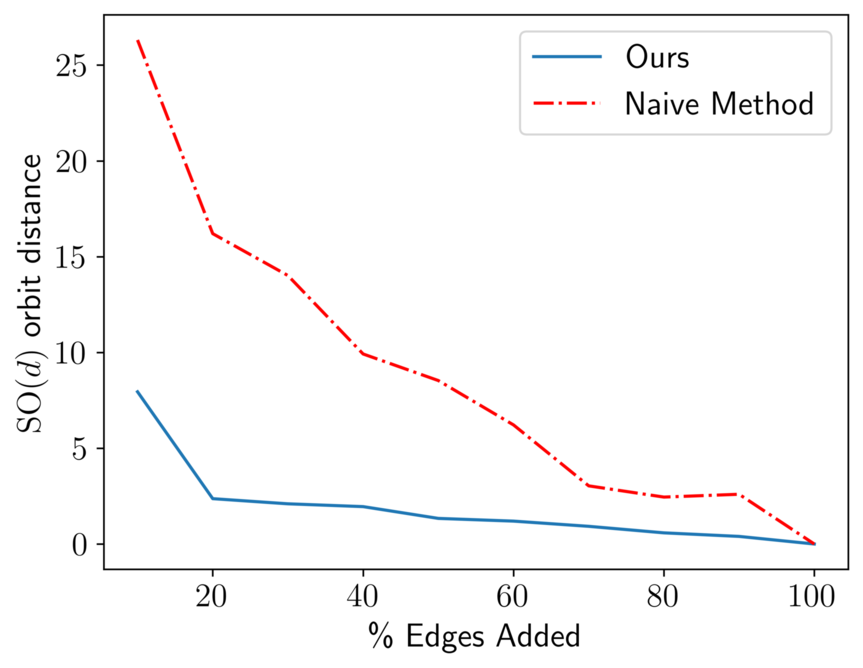}%
    \includegraphics[width=0.25\linewidth]{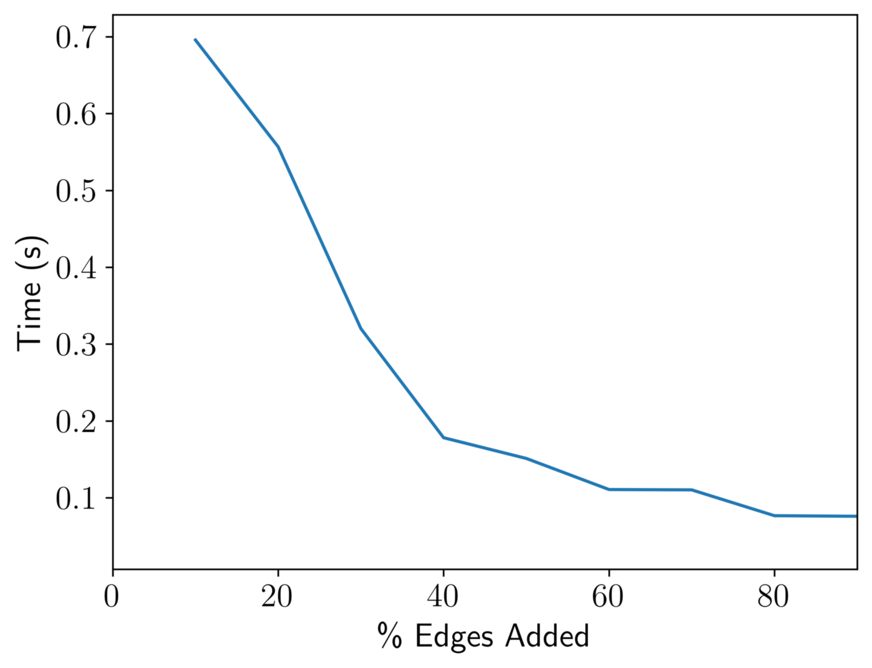}%
    \caption{\AIS2Klinik{}\\ \label{fig:quantitative:ais2klinik}}
  \end{subfigure}
  \begin{subfigure}{1.0\linewidth}
    \centering
    \includegraphics[width=0.25\linewidth]{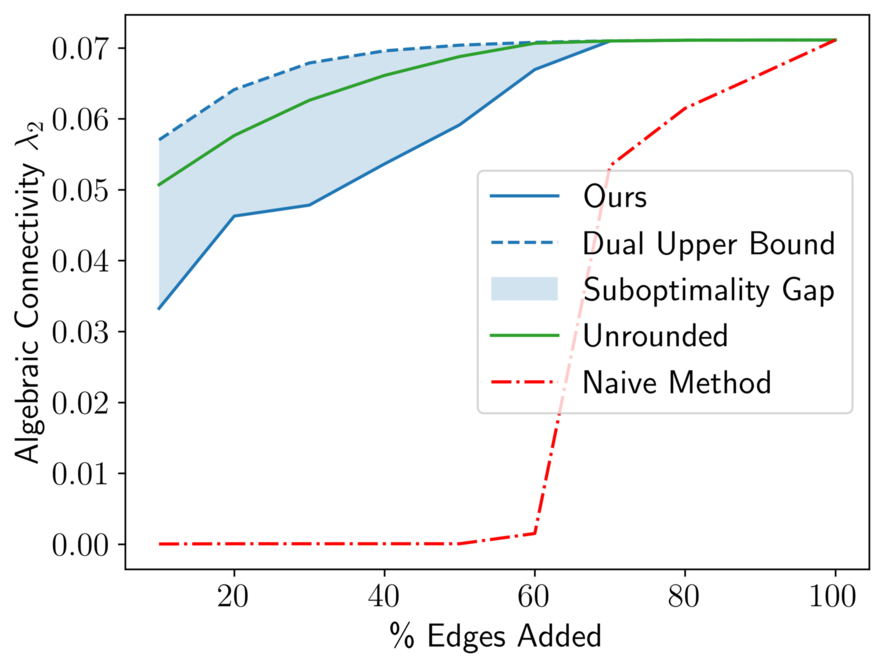}%
    \includegraphics[width=0.25\linewidth]{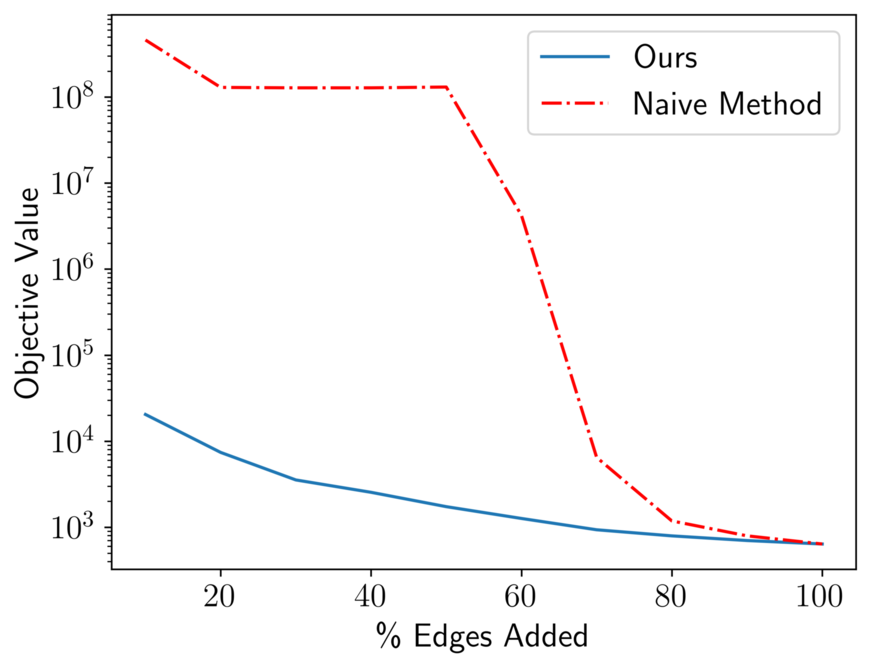}%
    \includegraphics[width=0.25\linewidth]{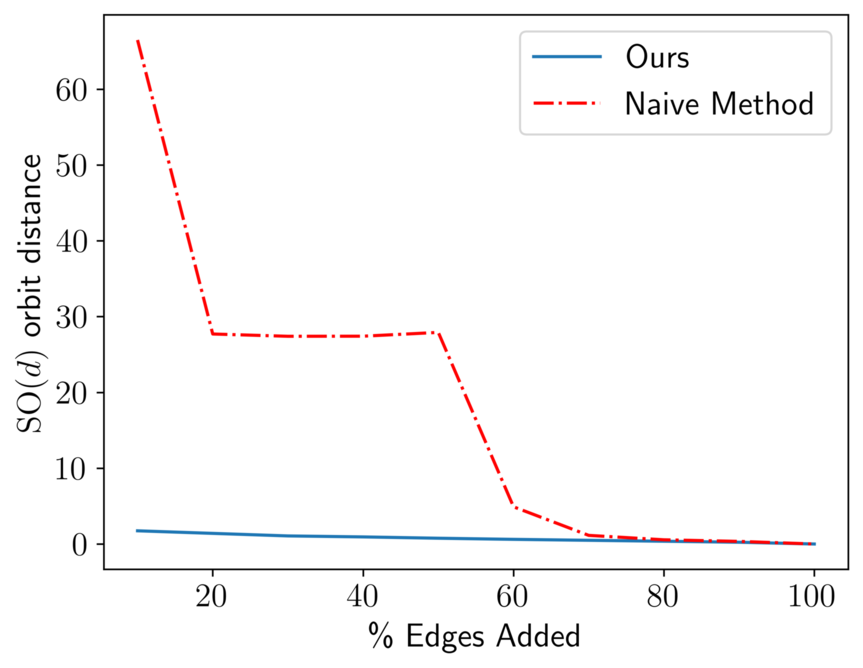}%
    \includegraphics[width=0.25\linewidth]{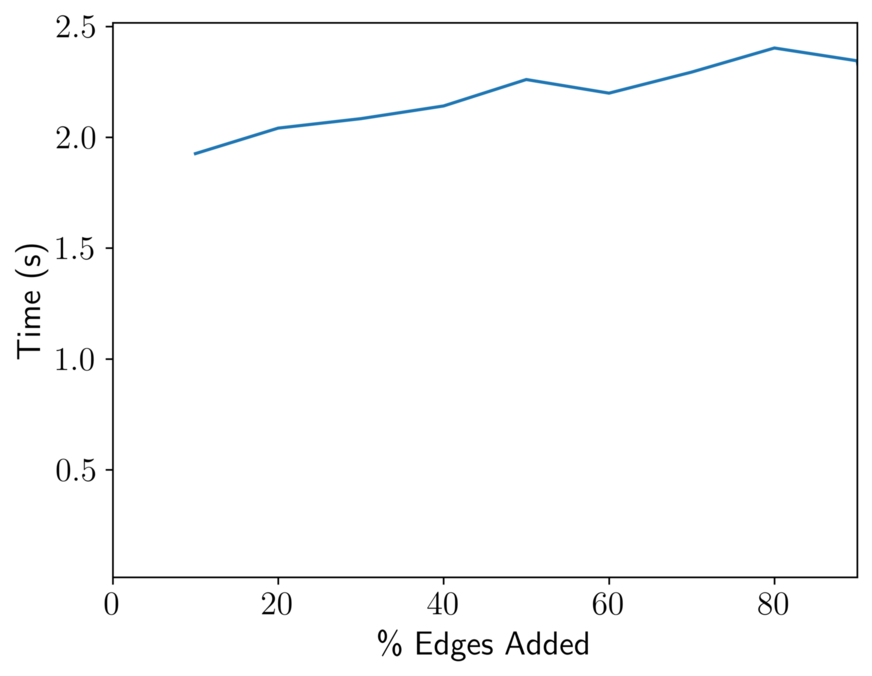}%
    \caption{\City10K{}\\ \label{fig:quantitative:city10k}}
  \end{subfigure}
  \caption{\textbf{Quantiative results for pose-graph sparsification}.
    Pose-graph optimization results for (a) the \KittiTwo{} dataset, (b) the
    \KittiFive{} dataset, (c) the \Intel{} dataset, (d) the \AIS2Klinik{}
    dataset, and (e) the \City10K{} dataset with varying degrees of sparsity (as
    percent of candidate edges added). Left to right: The algebraic connectivity
    of the graphs obtained by our method versus the na\"ive baseline (larger is
    better), the objective value of the maximum-likelihood estimator for each
    sparsified problem under the \emph{original} objective, i.e. with all edges
    retained (smaller is better; note the log-scale), the $\SO(d)$ orbit
    distance between a maximum-likelihood estimator computing using the
    sparsified graph and a corresponding maximum-likelihood estimator computed
    for the graph containing \emph{all} of the candidate edges (smaller is
    better), and the computation time for our approach. For 100\% loop closures,
    both algorithms return immediately, so no computation time is
    reported. \label{fig:quantitative}}
\end{figure*}

\begin{table}[t]
  \centering
  \setlength{\tabcolsep}{4pt}
  \begin{tabular}{c | c | c }
    \hline
    Dataset & No. of Nodes & No. of Candidate (Loop Closure) Edges  \\
    \hline
    \KittiTwo{} & 4661 & 43 \\
    \KittiFive{} & 2761 & 66 \\
    \Intel{} & 1728 & 785 \\
    \AIS2Klinik{} & 15115 & 1614 \\
    \City10K{} & 10000 & 10688 \\
    \hline   
  \end{tabular}
  \caption{Summary of the datasets used in our experiments.\label{table:datasets}}
\end{table}

We implemented the \MAC{} algorithm in Python and all computational experiments
were performed on a 2.4 GHz Intel i9-9980HK CPU. For computation of the Fiedler
value and the corresponding vector, we use TRACEMIN-Fiedler
\cite{manguoglu2010tracemin, sameh1982trace}. In all experiments, we run \MAC{}
for a maximum of 20 iterations, or when the duality gap in equation
\eqref{eq:duality-gap-as-bound} reaches a tolerance of $10^{-8}$.

We evaluated our approach using several benchmark pose-graph SLAM datasets. For
each dataset, we use odometry edges (between successive poses) to form the base
graph and loop closure edges as candidate edges. We consider selection of $10\%,
20\%, \ldots, 100\%$ of the candidate loop closure edges in the sparsification
problem. We present results on five datasets in this document (summarized in
Table \ref{table:datasets}). In particular, we consider here the \Intel{}
dataset, the \City10K{} dataset, KITTI dataset sequences 02, and 05, and the
\AIS2Klinik{} dataset. The \Intel{} dataset and the \AIS2Klinik{} dataset are
both obtained from real data, while the \City10K{} dataset is synthetic. The
\City10K{} dataset, however, contains far more candidate edges, and therefore
serves as a reasonable ``stress test'' for the computation time of our approach.
We compare our approach to a na\"ive heuristic method which does not consider
graph topology. Specifically, the na\"ive method selects the edges with the most
confident rotation measurements (i.e. the set of $K$ edges $\edge$ with the
largest $\kappa_{ij}$). This simple heuristic approach serves two purposes:
First, it provides a baseline, topology-agnostic approach to demonstrate the
impact of considering graph connectivity in a sparsification procedure; second,
we use this method to provide a \emph{sparse} initial estimate to our algorithm.
For each method, we compare the graph connectivity (as measured by the Fiedler
value) as well as the quality of maximum-likelihood estimators for pose-graph
optimization (i.e. solutions to Problem \ref{se-mle}) under the edge sets
selected by each method. We use SE-Sync \cite{rosen2019se} to compute the
globally optimal estimate of robot poses in each case.\footnote{In all of our
  experiments, SE-Sync returned \emph{certifiably-optimal} solutions to Problem
  \ref{se-mle}.}

Figure \ref{fig:qualitative} gives a qualitative comparison of the results from
our approach as compared with the baseline on the \City10K{} dataset across a
range of candidate loop closures allowed. We observe that even retaining $60\%$
of the candidate edges, the quality of the results provided by the baseline
method degrade significantly compared to those of the full set of loop closures.
In contrast, our sparsification approach leads to high-quality estimates even
with a \emph{significant} reduction in the number of edges.

For a quantitative comparison of each method, we report three performance
measures: (1) the algebraic connectivity $\lambda_2(\LapRotW(\selection))$ of
the graphs determined by each edge selection $\selection$, (2) the ``full''
objective value from Problem \ref{se-mle} (i.e. keeping 100\% of the edges)
attained by globally optimal solutions to the \emph{sparsified} problems, and
(3) the $\SO(d)$-orbit distance between the rotational states of a
maximum-likelihood estimator for the sparsified problem and those of a
maximum-likelihood estimator for the original (full) objective. The
$\SO(d)$-orbit distance between two rotational state estimates is defined as:
\begin{equation} \label{eq:sod-orbit-dist}
  \begin{gathered}
  \Sorbdist(X, Y) \triangleq \min_{G \in \SO(d)} \| X - GY \|_F, \\ X,Y \in \SO(d)^n,
  \end{gathered}
\end{equation}
which can be computed in closed form by means of a singular value decomposition
(see \citet[Theorem 5]{rosen2019se}). The ``full'' objective value attained by
solutions to the sparsified problems serves as one indicator of ``how close''
solutions to the sparsified problem are to the MLE for the ``full'' problem. If
the ``full'' objective value attained by the MLE for a sparsified graph is close
to that of the MLE computed using $100\%$ of the candidate edges, the MLE for
the sparsified graph is likely also a high-quality solution under the full
objective. The $\SO(d)$-orbit distance quantifies the actual deviation (up to
global symmetry) between the estimated rotational states in each solution. Since
the translational states are recovered analytically (per \cite{rosen2019se}),
this serves as a useful measure, independent of the global scale of the
translational states, of the degradation in solution quality from the ``full''
MLE as we sparsify the graph.

Figure \ref{fig:quantitative} summarizes our quantitative results on each
benchmark dataset. Our approach consistently achieves better connected graphs
(as measured by the algebraic connectivity). In most cases, a maximum of 20
iterations was enough to achieve solutions to the relaxation with algebraic
connectivity very close to the dual upper bound (and therefore nearly globally
optimal). Moreover, maximum-likelihood estimators for Problem \ref{se-mle}
computed using the sparsified measurement graphs from our method perform
significantly better in terms of their ``full'' objective value and their
deviation from a MLE computed using all of the measurement edges.

Beyond providing high-quality sparse measurement graphs, our approach is also
fast. For the \Intel{} dataset, all solutions were obtained in less than 250
milliseconds. Sparsifying the (larger) \AIS2Klinik{} dataset required up to 700
ms, but only around 100 ms when larger edge selections were allowed, as the
duality gap tolerance was reached in fewer than the maximum allowed iterations
of Frank-Wolfe method. The largest dataset (in terms of candidate edges) is the
\City10K{} dataset, with over 10000 loop closure measurements to select from.
Despite this, our approach produces near-optimal solutions in just 2 seconds.

With respect to the suboptimality guarantees of our approach, it is interesting
to note that on both the \Intel{} and \City10K{} datasets, the rounding
procedure introduces fairly significant degradation in algebraic connectivity -
particularly for more aggressive sparsity constraints. In these cases, it seems
that the Boolean relaxation we consider leads to fractional optimal solutions,
rather than solutions amounting to hard selection of just a few edges. It is not
clear in these cases whether the integral solutions obtained by rounding are
indeed suboptimal for the Problem \ref{prob:max-aug-alg-conn}, or whether this
is a consequence of the \emph{integrality gap} between \emph{global} optima of
the relaxation and of Problem \ref{prob:max-aug-alg-conn}.\footnote{In general,
  even simply \emph{verifying} the global optimality of solutions to Problem
  \ref{prob:max-aug-alg-conn} is NP-Hard \cite{mosk2008maximum}.}

At present we do not have access to an implementation of the D-optimal
sparsification approaches considered in \cite{khosoussi2019reliable}. However,
we evaluate our method on similar (and similarly sized) datasets, and a
comparison of the computation times suggests that our approach compares
favorably in computation time and (consequently) the scale of problems we can
consider. For example, \citet[Sec. 9.4]{khosoussi2019reliable} report
computation times of ``$\gg 10$ minutes'' to solve a convex relaxation of the
D-optimal sparsification problem on the \City10K{} dataset (versus $\approx 2$
seconds per Figure \ref{fig:quantitative:city10k}). In light of this fact, and
since both the D-optimality criterion and E-optimality criterion are essentially
variance-minimizing criteria, in the event that one requires D-optimal designs
specifically, an interesting avenue for future work would be to use our
E-optimal designs to supply an initial estimate to, for example, the convex
relaxation approach of \citet{khosoussi2019reliable}. A detailed empirical
comparison of the impact of different optimal design criteria on the quality of
SLAM solutions would be tremendously helpful for practitioners, and would also
be an interesting area for future work.

\section{Conclusion and Future Work}

In this paper, we proposed an approach for pose-graph measurement sparsification
by maximizing the \emph{algebraic connectivity} of the measurement graphs, a key
quantity which has been shown to control the estimation error of pose-graph SLAM
solutions. Our algorithm, \MAC{}, is based on a first-order optimization
approach for solving a convex relaxation of the maximum algebraic connectivity
augmentation problem. The algorithm itself is simple and computationally
inexpensive, and, as we showed, admits formal \emph{post hoc} performance
guarantees on the quality of the solutions it provides. In experiments on
several benchmark pose-graph SLAM datasets, our approach quickly produces
high-quality sparsification results which better preserve the connectivity of
the graph and, consequently, the quality of SLAM solutions computed using those
graphs. An interesting area for future work is the empirical comparison of
different optimality criteria for the pose graph sparsification problem.
Finally, in this work we consider only the removal of measurement graph
\emph{edges}. For lifelong SLAM applications, an important aspect of future work
will be to combine these procedures with methods for \emph{node} removal (e.g.
\cite{Johannsson12rssw, CarlevarisBianco13iros, Carlone2014eliminating}).

\appendices
\section{}

\subsection{Subgradients of the Fiedler value}\label{app:gradients}

In this appendix we consider the problem of computing a supergradient of
$\objectiveF(\selection) = \lambda_2(\LapRotW(\selection))$ with respect to
$\selection$. Strictly speaking, $\objectiveF$ need not be differentaible at a
particular $\selection$ (which occurs specifically when
$\lambda_2(\LapRotW(\selection))$ appears with multiplicity greater than 1, i.e.
it is not a \emph{simple} eigenvalue). We say that a vector $g \in \R^m$ is a
\emph{supergradient} of a concave function $\objectiveF$ at $\selection$ if, for
all $y, x$ in the domain of $\objectiveF$:
\begin{equation}\label{eq:superdifferential-def}
  \objectiveF(y) - \objectiveF(x) \leq g\transpose(y - x).
\end{equation}
Equation \eqref{eq:superdifferential-def} generalizes the notion of
differentiability to the scenario where the function $\objectiveF$ may not be
(uniquely) differentiable a particular point.
We call the set of all supergradients at a particular value of $\selection$ the
\emph{superdifferential} of $\objectiveF$ at $\selection$, denoted $\partial
\objectiveF(\selection)$ \cite{rockafellar2015convex}.

We aim to prove the statement that $\nabla \objectiveF(\selection)$ as defined
in equation \eqref{eq:supergradient} is a supergradient of $\objectiveF$ at
$\selection$.

\begin{proof}[Proof of Theorem \ref{thm:supergradient}]
  We aim to prove the claim by way of equation \eqref{eq:superdifferential-def}.
  Let $u, v \in \R^m,\ \|u\|_2 = \|v\|_2 = 1$ be \emph{any} normalized
  eigenvectors of $\LapRotW(x)$ and $\LapRotW(y)$ with corresponding eigenvalues
  $\lambda_2(\LapRotW(x))$ and $\lambda_2(\LapRotW(y))$, respectively. By
  definition, then, $u$ and $v$ are Fiedler vectors of $\LapRotW(x)$ and
  $\LapRotW(y)$, respectively. Then the left-hand side of equation
  \eqref{eq:superdifferential-def} can be written as:
  \begin{equation}\label{eq:func-diff}
    \begin{aligned}
      \objectiveF(y) - \objectiveF(x) &= \lambda_2(\LapRotW(y)) - \lambda_2(\LapRotW(x)) \\
      &= v\transpose \LapRotW(y) v - u\transpose \LapRotW(x) u.
    \end{aligned}
  \end{equation}
  Now, substitution of $u$ for the Fiedler vector into the definition in
  \eqref{eq:supergradient}, reveals that the $k$-th element of $\nabla
  \objectiveF(x)$ is:
  \begin{equation}
    \nabla \objectiveF(x)_k = u\transpose \LapRotWC_k u.
  \end{equation}
  In turn, the right-hand side of \eqref{eq:superdifferential-def} can be
  written as:
  \begin{equation}\label{eq:differential-linear-approx}
    \nabla \objectiveF(x)\transpose (y - x) = \sum_{k=1}^m  (y_k - x_k) u\transpose \LapRotWC_k u.
  \end{equation}

  Since $u$ and $v$ are minimizers of their respective Rayleigh quotient
  minimization problems, we know:
  \begin{equation}\label{eq:laprotwy-optimality}
    \begin{aligned}
      v\transpose \LapRotW(y) v &\leq u\transpose \LapRotW(y) u \\
      &= u\transpose \LapRotWO u + \sum_{k=1}^m y_k u\transpose \LapRotWC_k u,
    \end{aligned}
  \end{equation}
  where the first line follows from the optimality of $v$ with respect to the
  Rayleigh quotient for $\LapRotW(y)$ and the second line follows from the
  definition of $\LapRotW(y)$. Consider ``adding zero'' to each $y_k$ in
  \eqref{eq:laprotwy-optimality} as $x_k - x_k$ to obtain an equivalent
  expression:
  \begin{equation}\label{eq:laprotw-to-gradient-deriv}
    \begin{aligned}
      \sum_{k=1}^m y_k u\transpose \LapRotWC_k u  &=  \sum_{k=1}^m (y_k + x_k - x_k) u\transpose \LapRotWC_k u, \\
      &= \sum_{k=1}^m x_k u\transpose \LapRotWC_k u + \sum_{k=1}^m (y_k - x_k) u\transpose \LapRotWC_k u.
    \end{aligned}
  \end{equation}
  Comparison to \eqref{eq:differential-linear-approx} reveals that the last term
  in \eqref{eq:laprotw-to-gradient-deriv} is \emph{exactly} equal to $\nabla
  \objectiveF(x)\transpose (y -x)$. In turn, substitution of
  \eqref{eq:differential-linear-approx} into
  \eqref{eq:laprotw-to-gradient-deriv} gives:
  \begin{equation}
    \sum_{k=1}^m y_k u\transpose \LapRotWC_k u = \sum_{k=1}^m x_k u\transpose \LapRotWC_k u + \nabla \objectiveF(x)\transpose (y -x).
  \end{equation}
  Substitution back into \eqref{eq:laprotwy-optimality} gives the bound:
  \begin{equation}
    v\transpose \LapRotW(y) v \leq u\transpose \LapRotWO u + \sum_{k=1}^m x_k u\transpose \LapRotWC_k u + \nabla \objectiveF(x)\transpose (y - x).
  \end{equation}
  Finally, from the definition of $\LapRotW(x)$, we obtain
  \begin{equation}
    v\transpose \LapRotW(y) v \leq u\transpose \LapRotW(x) u +  \nabla \objectiveF(x)\transpose (y - x).
  \end{equation}
  Subtracting $u\transpose \LapRotW(x) u$ from both sides and substituting into
  \eqref{eq:func-diff} gives the desired result.
\end{proof}

\subsection{Solving the direction-finding subproblem}\label{app:dir-subproblem}

This appendix aims to prove the claim that \eqref{eq:dir-subproblem-opt}
provides an optimal solution to the linear program in Problem
\ref{prob:dir-subproblem}.

\begin{proof}[Proof of Theorem \ref{thm:dir-subproblem}]
Rewriting the objective from Problem \ref{prob:dir-subproblem} in terms
of the elements of $s$ and $\nabla \objectiveF(\selection)$, we have:
\begin{equation}\label{eq:dir-subproblem-objective-deriv}
  \begin{aligned}
    s\transpose \nabla \objectiveF(\selection) &= \sum_{k=1}^m s_k \nabla \objectiveF(\selection)_k \\
    &= \sum_{k=1}^m s_k y^*(\selection)\transpose \LapRotWC_k y^*(\selection),
  \end{aligned}
\end{equation}
where in the last line we have used the definition of $\nabla
\objectiveF(\selection)_k$ in \eqref{eq:supergradient}. Now, since each
$\LapRotWC_k \succeq 0$, every component of the gradient must always be
nonnegative, i.e. $\nabla \objectiveF(\selection)_k \geq 0$. Further, since $0
\leq s_k \leq 1$, the objective in \eqref{eq:dir-subproblem-objective-deriv} is
itself a sum of nonnegative terms. From this, it follows directly that the
objective in \eqref{eq:dir-subproblem-objective-deriv} is maximized (subject to
the constraint that $\sum_{k=1}^m s_k = K$) specifically by selecting (i.e. by
setting $s_k = 1$) each of the $K$ largest components of $\nabla
\objectiveF(\selection)$, giving the result in \eqref{eq:dir-subproblem-opt}.
\end{proof}

\bibliographystyle{IEEEtranN}
\begin{footnotesize}
\bibliography{references, thesis-main}
\end{footnotesize}

\end{document}